\documentclass{article}
\usepackage{microtype}
\usepackage{graphicx}
\usepackage{subcaption}
\usepackage{booktabs} 
\usepackage{hyperref}

\usepackage[accepted]{icml2026}

\usepackage{amsmath}
\usepackage{amssymb}
\usepackage{mathtools}
\usepackage{amsthm}
\usepackage{booktabs} 
\usepackage{multirow} 
\usepackage{graphicx}
\usepackage{xcolor}
\usepackage{pifont}
\usepackage{float}
\usepackage[capitalize,noabbrev]{cleveref}

\theoremstyle{plain}
\newtheorem{theorem}{Theorem}[section]
\newtheorem{proposition}[theorem]{Proposition}

\theoremstyle{definition}
\newtheorem{definition}[theorem]{Definition}

\theoremstyle{remark}

\usepackage[textsize=tiny]{todonotes}

\renewcommand{\algorithmicrequire}{\textbf{Input:}}
\renewcommand{\algorithmicensure}{\textbf{Output:}}
\renewcommand{\algorithmiccomment}[1]{\hfill {\footnotesize$\triangleright$ #1}}

\icmltitlerunning{Breaking the Lock-in: Diversifying Text-to-Image Generation via Representation Modulation}

\begin{document}

\twocolumn[
  \icmltitle{Breaking the Lock-in:\\ Diversifying Text-to-Image Generation via Representation Modulation}

\icmlsetsymbol{intern}{\textdagger}

\begin{icmlauthorlist}
  \icmlauthor{Dahee Kwon}{yyy}
  \icmlauthor{Haeun Lee}{yyy,intern}
  \icmlauthor{Jaesik Choi}{yyy,comp}
\end{icmlauthorlist}

\icmlaffiliation{yyy}{Korea Advanced Institute of Science and Technology (KAIST)}
\icmlaffiliation{comp}{INEEJI}

\icmlcorrespondingauthor{Jaesik Choi}{jaesik.choi@kaist.ac.kr}

\icmlkeywords{Machine Learning, ICML}

\vskip 0.3in
]

\printAffiliationsAndNotice{
\textsuperscript{\textdagger}Work done during an undergraduate internship at KAIST.
}

\begin{abstract}

Recent text-to-image models built on large-scale Transformer backbones and flow-based objectives deliver strong text–image alignment and high visual quality, yet often produce overly similar samples under a fixed prompt. Existing diversity-enhancement methods alleviate this, but typically require expensive sampling or auxiliary optimization, incurring non-trivial overhead. To investigate the root cause of this homogeneity, we examine intermediate Transformer features and observe that the zero-frequency spatial average (DC) component rapidly converges across seeds early in generation, causing early trajectory lock-in that limits downstream variation. Building on this, we propose DC Attenuation for diVersity Enhancement (\textbf{DAVE}), a training-free representation-level intervention that selectively attenuates this component in the early regime. DAVE preserves the sampling pipeline with negligible overhead, improving prompt-consistent diversity while maintaining competitive image quality.

\end{abstract}

\section{Introduction}

\begin{figure*}
    \centering
    \includegraphics[width=\linewidth]{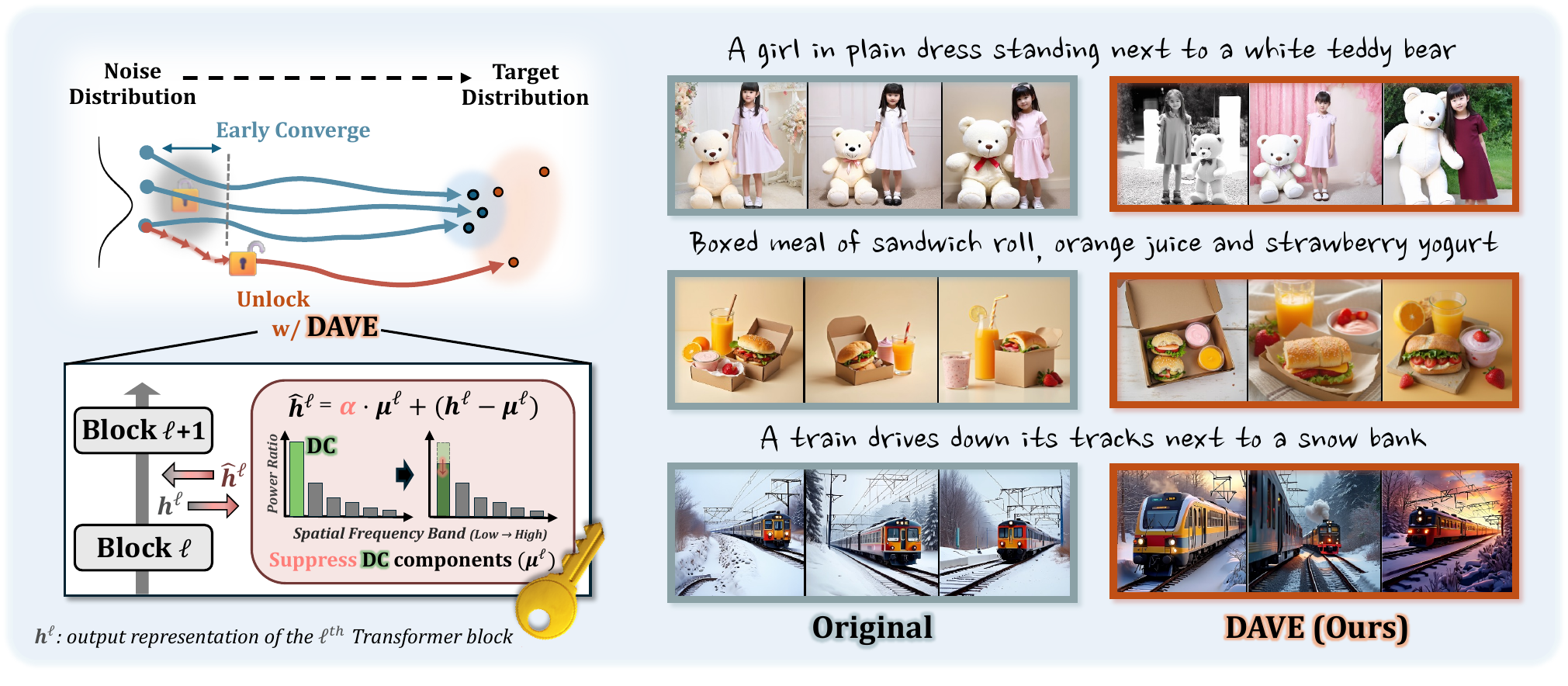}
    \caption{Overview of DAVE. Compared to the original flow-based generative model, DAVE consistently produces more diverse images with minimal computation.}
    \label{fig:main}
\end{figure*}

Recent advancements in text-to-image (T2I) generative models—particularly those leveraging scalable Transformer architectures and flow-based objectives—have established these models as a dominant paradigm in generative modeling, featuring remarkable text–image alignment and photorealism. These models enable users to reliably produce high-quality visual content, driving widespread adoption across diverse domains~\cite{croitoru2023diffusion, esser2024scaling,chen2025generative}.

However, this reliability in quality is often accompanied by a reduction in generation diversity, defined as variation in under-specified visual factors (e.g., layout, style) while preserving prompt-conditioned semantics. When generating multiple images from a fixed prompt, outputs often exhibit limited variation, converging to similar compositions or stylistic patterns~\cite{mukhopadhyay2023diffusion, astolfi2024consistency,albuquerque2025benchmarking}. Insufficient diversity hampers users’ ability to explore broad candidates, discover rare configurations, and construct synthetic datasets with wide distributional coverage, ultimately reducing downstream performance gains. Consequently, diversity is not merely an optional add-on, but a core performance axis that determines the usability and scalability of generative models.

A growing body of research has recently turned its attention to this issue, with many approaches introducing diversity control directly at sampling time. By tuning guidance schedules~\cite{um2025minority,sadat2023cads}, injecting stochastic perturbations~\cite{harrington2025s}, or modifying the sampling process~\cite{morshed2025diverseflow,corso2023particle}, these training-free methods can steer generation behavior to recover diversity while preserving visual quality.

However, existing approaches face two practical limitations. First, they often incur substantial computational and memory overhead. Enhancing diversity typically requires extra sampling steps, auxiliary optimization, or parallel candidate generation across multiple seeds—all of which exacerbate memory consumption and decoding costs. As modern generative models scale, even modest overheads become a significant bottleneck. Second, the fundamental causes of limited diversity remain insufficiently understood. While prior studies analyze diversity through inference-time controls, a detailed mechanistic account of exactly where and why this collapse occurs within the model's internal representations remains largely unexplored.

From this perspective, we approach diversity control through the lens of representation-level interventions. By analyzing intermediate Transformer representations, we observe that the zero-frequency spatial average (DC) component exhibits strong trajectory lock-in in the early denoising steps across seeds—a phase that coincides with the formation of global layouts and coarse semantic structures. Motivated by this observation, we propose DC Attenuation for diVersity Enhancement (DAVE), which selectively attenuates this component at inference time via a lightweight internal operation. Requiring neither retraining nor changes to the sampling pipeline, this simple adjustment incurs virtually no computational or memory overhead and imposes no batch size constraints. Through extensive experiments, we demonstrate that DAVE achieves competitive performance against a range of state-of-the-art methods while incurring substantially lower overhead.
\section{Related Work}

\paragraph{Diversity Enhancement}
Recent efforts to mitigate diversity collapse in T2I models broadly fall into two categories: batch-wise diversification and individual trajectory manipulation. Most approaches adopt the former, explicitly promoting diversity within a batch by encouraging jointly generated samples to diverge. For instance, Particle Guidance (PG)~\citep{corso2023particle} adds a pairwise gradient potential to push samples apart, while DiverseFlow~\citep{morshed2025diverseflow} optimizes a batch-wide diversity objective along the trajectory. To refine this repulsion, SPELL~\citep{kirchhof2024shielded} introduces sparse repellency terms that activate only when trajectories risk collapsing onto one another. Similarly, OSCAR~\citep{wu2025oscar} maximizes the feature-space volume of a batch and injects stochasticity projected orthogonally to the flow, spreading trajectories without harming fidelity. Extending this concept beyond a single batch, SPARKE~\citep{jalali2026sparke} guides sampling with a prompt-aware Rényi kernel entropy score to scale efficiently across large prompt sets. However, these joint-evaluation methods incur non-trivial computational and memory overheads, which become increasingly burdensome as generative models scale up.

An alternative line of approach manipulates each trajectory directly rather than comparing samples. For example, CADS~\citep{sadat2023cads} injects scheduled noise into the text condition during early inference to prevent over-reliance on the prompt and recover diversity suppressed at high guidance scales. Yet, these methods still operate primarily as sampling-level heuristics. They offer little insight into the internal mechanisms that drive generations toward similar outputs in the first place, leaving the root cause of diversity collapse largely unexamined.

\paragraph{Internal Representation Analysis}
A complementary line of research examines the internal representations of T2I models to improve their interpretability and controllability~\citep{kim2025reflex,shin2025exploring,li2026unraveling,yang2025splitflux,dalva2412fluxspace,si2024freeu,han2025enhancing}. Several works manipulate intermediate features to steer diffusion or flow dynamics at inference time, such as attention-based editing~\citep{cao2023masactrl,voynov2023p+,hertz2022prompt} and architectural rebalancing~\citep{si2024freeu}. Beyond controllable editing, modulating internal features can also foster generation creativity, as seen in C3~\citep{han2025enhancing}. Yet, even such diversity-oriented approaches overlook the fundamental mechanism of diversity collapse. They rarely investigate where and when the trajectory locks in. In contrast, we analyze the internal feature dynamics of Transformer backbones, pinpointing early DC convergence as a major representational bottleneck for seed-level variation. Consequently, we propose DAVE, a direct representation modulation approach that enhances diversity without additional sampling constraints.

\section{Method}
\vspace{-0.3em}
In this section, we first investigate the representation-level mechanisms driving diversity collapse, motivating our proposed method: DC Attenuation for diVersity Enhancement (DAVE). By selectively attenuating the internal DC component during the early stages of generation, DAVE broadens the model's generative scope with minimal complexity.

\subsection{Preliminary}

Text-conditioned flow matching generates samples by transporting a source distribution $X_0$ to the data distribution $X_1$. In practice, we take $p_0=\mathcal{N}(0,I)$, sample $x_0\sim p_0$, and evolve it over time via a learned vector field. The resulting trajectory $x_t$ follows the Ordinary Differential Equation:
\vspace{-0.3em}
\begin{equation}
\frac{d x_t}{d t} = v_\theta(x_t,t;c), \qquad t\in[0,1],
\label{eq:fm_ode}
\end{equation}
where $c$ denotes the conditioning signal (e.g., a text embedding) and $v_\theta(x_t,t;c)$ specifies the instantaneous direction of evolution (velocity) at time $t$.

Sampling is performed by discretizing Eq.~\eqref{eq:fm_ode} into $K$ steps.
Starting from $x_0\sim p_0$, we iteratively update the state according to a numerical solver:
\begin{equation}
x_{k+1} = x_k + \Delta t \, v_\theta(x_k,t_k;c),
\qquad k=0,\dots,K-1,
\label{eq:fm_update}
\end{equation}

where $t_k$ and $\Delta t$ denote the discretized time and step size, respectively.
The final generated sample is $x_K \approx x_1$.

In modern text-to-image architectures, the vector field $v_\theta$ generally adopts a Transformer backbone.
Let $h_t^{(\ell)}$ denote the hidden representation at block $\ell$.
A single Transformer block updates the representation as:
\begin{equation}
h_t^{(\ell+1)} = \mathrm{Block}^{(\ell+1)}\!\big(h_t^{(\ell)}, t, c\big).
\label{eq:transformer_block}
\end{equation}
The velocity prediction $v_\theta(x_t,t;c)$ at step $t$ is computed from the last-block representation.

\subsection{Lock-in on DC component}
To investigate the mechanisms underlying this diversity degradation, we analyze the internal representations of transformer blocks. This investigation reveals a notable pattern: a pervasive global bias that manifests as a systematic drift in hidden states during generation. Specifically, a comparison of hidden representations $h_t^{(\ell)}$ across various noise seeds shows that the zero-frequency component—namely, the DC component—becomes remarkably aligned across samples. Using SD3~\cite{esser2024scaling}, we analyze representations from Transformer block 5 over 100 random seeds per prompt. The DC component shows high consistency across these seeds, exhibiting high pairwise cosine similarity and a low coefficient of variation (Figure~\ref{fig:method1}). Accounting for 51.2$\%$ of the total energy, this component constitutes a dominant global signal rather than a minor artifact—a phenomenon we term \emph{early DC drift}.

\begin{figure}
    \centering
    \includegraphics[width=0.95\linewidth]{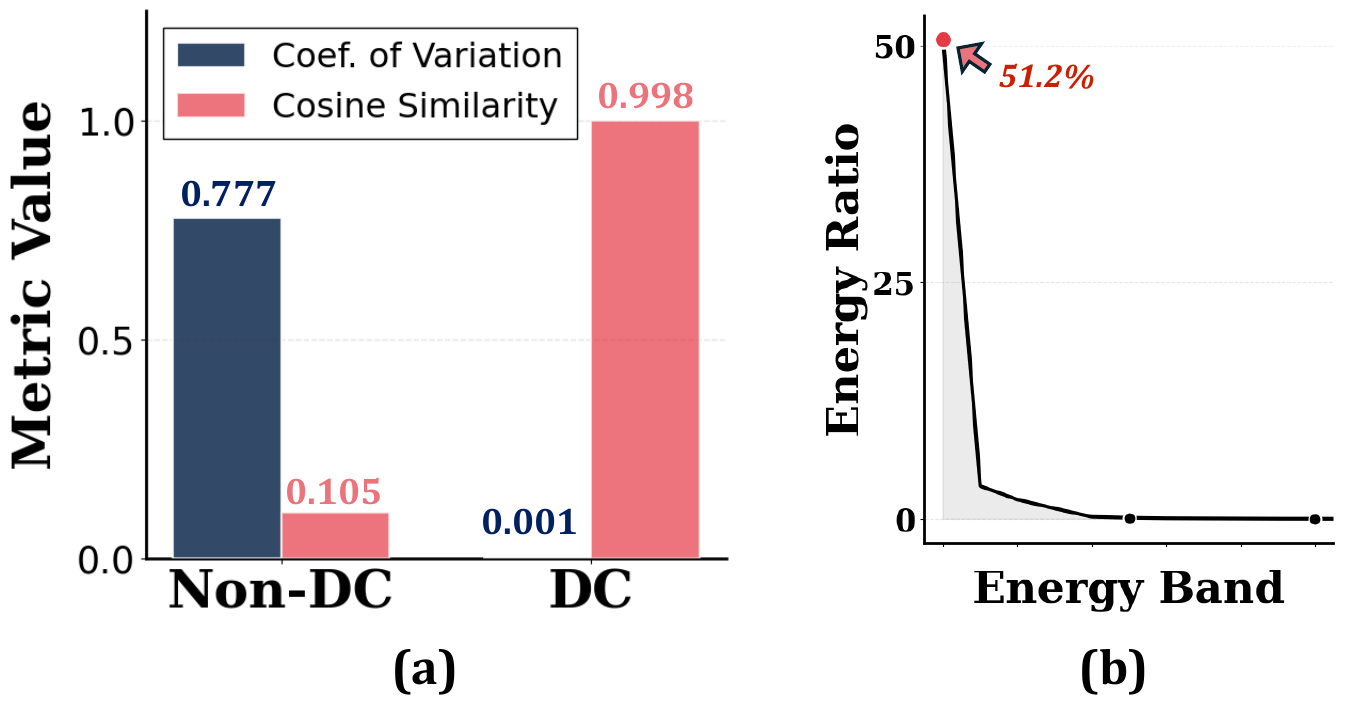}
    \caption{(left) DC variation across seeds; “non-DC” denotes representations with the DC component removed. (right) Band energy ratios (band/total), where the leftmost bin corresponds to the DC (lowest-frequency) component.}
    \label{fig:method1}
\vspace{-0.4em}
\end{figure}

We posit that early DC drift is a major bottleneck of reduced sample diversity, arising from a synergy between architectural bias and the training objective. Since the DC component represents the zero-frequency spatial average, its early dominance is directly explained by the spectral bias of neural networks~\citep{rahaman2019spectral,wang2026analytical}—their tendency to prioritize low-frequency signals over high-frequency details. This bias directs the model's focus toward the DC component at the onset of sampling, consistent with the early emergence of global structures in the generative process~\citep{choi2021ilvr,liu2025navigation,esser2024scaling}.

This tendency is further exacerbated by MSE-based flow-matching objectives under high uncertainty. In early sampling steps characterized by a low Signal-to-Noise Ratio (SNR), the objective is mathematically minimized when the model predicts the text-conditioned expectation of the data. While high-frequency, sample-specific textures average out across seeds, the DC component remains a robust cue of global statistics. Driven by this dual bias, the DC component is strongly pulled toward the conditional mean, establishing a seed-invariant anchor that dictates the generative trajectory. Consequently, the global layout tends to lock in so early that it severely restricts the ability of the initial noise seed to induce sufficient structural variation. A step-wise analysis of the denoising trajectory confirms that this cross-seed DC alignment is indeed concentrated in the earliest steps and decays toward the end of generation (See Table~\ref{tab:timestep_analysis}). Appendix~\ref{appx:proof} formalizes how early DC-dominated alignment can limit trajectory separation and diversity.

\subsection{DC Attenuation for Diversity Enhancement}

\begin{figure}
    \centering
    \includegraphics[width=\linewidth]{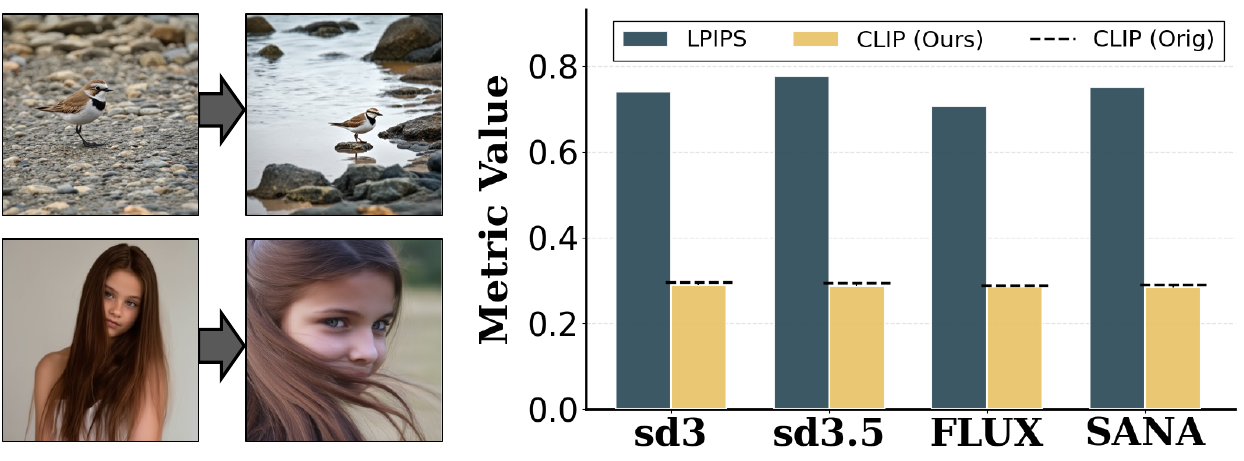}
    \caption{(Left) Outputs with DC attenuation for “A small bird standing on a rocky beach.” and “A beautiful girl with long brown hair.” (Right) Perceptual changes and text–image alignment under DC attenuation; CLIP (Orig) is the unmanipulated CLIP score.}
    \label{fig:method2}
\vspace{-0.4em}
\end{figure}

Given that early DC drift acts as a restrictive global anchor, we hypothesize that directly weakening its influence can unlock the generative trajectory from its premature structural commitment. To test this, we selectively attenuate the DC component during early sampling steps to suppress this dominant global bias. As shown in Figure~\ref{fig:method2}, this targeted intervention substantially changes the final image layouts without compromising semantic alignment. These observations confirm that modulating the early-stage DC component is a highly effective strategy for diversity enhancement.

Building on these empirical insights, we propose DC Attenuation for diVersity Enhancement (DAVE). DAVE is designed to broaden the model's generative scope by strategically dampening the dominant DC signal during early generation stages. By dismantling this seed-invariant anchor, the method empowers the stochasticity of the initial noise to drive meaningful structural variations, breaking the homogenizing effect of the conditional mean.

For a hidden representation $h_t^{(\ell)} \in \mathbb{R}^{ D\times C}$, where $D$ is the number of visual tokens and $C$ is the channel dimension, the DC component $\mu_t^{(\ell)}$ is straightforwardly obtained as the spatial mean, bypassing the need for explicit frequency-domain transforms:
\begin{equation}
\mu_t^{(\ell)} = \frac{1}{D}\sum_{d=1}^{D} h_t^{(\ell)}[d,:]
\;\in\; \mathbb{R}^{1\times C}.
\label{eq:dc_def}
\end{equation}

Since our focus is on spatial structures, we isolate and manipulate only the representations containing visual information, particularly in multi-modal architectures like MMDiT. DAVE intervenes on the output of Transformer blocks $\ell \in \mathcal{L}$ during the early generative phase ($t < \tau$):
\begin{equation}
\hat{h}_t^{(\ell)} = \alpha \cdot \mu_t^{(\ell)} + \Big(h_t^{(\ell)}-\mu_t^{(\ell)}\Big).
\label{eq:dc_attenuation}
\end{equation}
where $\alpha \in (0,1)$ controls the attenuation strength, and $\mu_t^{(\ell)}$ is broadcast across all D tokens when subtracted from (and added to) $h_t^{(\ell)}$. We substitute the original hidden representation $h_t^{(\ell)}$ with the modified $\hat{h}_t^{(\ell)}$ before it is passed to the subsequent Transformer block $(\ell +1)$. As a surgical, training-free intervention, DAVE integrates seamlessly into existing pre-trained models without requiring any architectural modifications or additional optimizations. By attenuating the early-stage DC component, DAVE prevents premature anchoring to a common structural baseline, thereby amplifying the relative influence of seed-specific spatial residuals.

\paragraph{Parameter Selection.}

DAVE is modulated by three key parameters: the attenuation strength $\alpha$, the target block pool $\mathcal{L}$, and the temporal cutoff $\tau$. We outline practical guidance for selecting each parameter below, with a comprehensive empirical analysis of these configurations in Section~\ref{exp:abl}.

\begin{table*}[!htb]
\centering
\caption{Quantitative results for diverse generation in the independently sampled generation setting, where samples are generated without explicit in-batch interaction. Bold indicates the best result for each metric, and underlining indicates the second-best.}

\label{tab:full_comparison}
\setlength{\tabcolsep}{3.5pt} 
\begin{tabular}{lllccccccc}
\toprule
\textbf{Dataset} & \textbf{Model} & \textbf{Method} & \textbf{FID} $\downarrow$ & \textbf{Prec} $\uparrow$ & \textbf{Rec} $\uparrow$ & \textbf{Cov} $\uparrow$ & \textbf{Dens} $\uparrow$ & \textbf{Vendi} $\uparrow$ & \textbf{CLIP} $\uparrow$ \\
\midrule
\multirow{15}{*}{\textbf{ImageNet}} 
& \multirow{5}{*}{\textbf{SD3.5}} 
  & Orig & 22.23 & \underline{0.8604} & 0.2589 & 0.7757 & 1.0071 & 1.71 & \underline{0.2952} \\
& & Orig ($\omega_{CFG}=2$) & \textbf{17.13} & 0.8144 & 0.5106 & \underline{0.8032} & 0.9311 & 2.05 & 0.2861 \\
& & CADS & 17.91 & 0.7856 & \underline{0.5698} & 0.7936 & 0.8386 & 2.09 & 0.2907 \\
& & SPARKE & 22.27 & \textbf{0.8686} & 0.3136 & 0.4004 & \textbf{1.2289} & 1.77 & \textbf{0.3016} \\
& & Ours & 20.74 & 0.8090 & \textbf{0.6489} & 0.7895 & 0.7810 & \underline{2.33} & 0.2897 \\
& & Ours Random &  \underline{17.57} & 0.8591 & 0.5422 & \textbf{0.8371} & \underline{1.0114} & \textbf{2.50} & 0.2939 \\
\cmidrule(r){2-10}
& \multirow{5}{*}{\textbf{Flux.1-dev}} 
  & Orig & \underline{24.01} & 0.8219 & 0.3149 & \underline{0.7555} & 0.9371 & 1.88 & \textbf{0.2902} \\
& & Orig ($\omega_{CFG}=2$) & \textbf{21.00} & \underline{0.8220} & 0.4064 & \textbf{0.7726} & \underline{0.9475} & 2.09 & 0.2852 \\
& & CADS & 39.97 & 0.5914 & \underline{0.5172} & 0.5913 & 0.5045 & \textbf{2.69} & 0.2689 \\
& & SPARKE & 22.65 & \textbf{0.8298} & 0.3561 & 0.3935 & \textbf{1.1112} & 2.04 & \textbf{0.2921} \\
& & Ours & 25.31 & 0.7112 & 0.5102 & 0.6963 & 0.7189 & 2.26 & \underline{0.2858} \\
& & Ours Random & 24.36 & 0.6927 & \textbf{0.6032} & 0.7295 & 0.6869 & \underline{2.44} & 0.2818 \\
\cmidrule(r){2-10}
& \multirow{5}{*}{\textbf{SANA1.5}} 
  & Orig & \underline{27.85} & \underline{0.7860} & 0.1623 & 0.6620 & \underline{0.8506} & 1.59 & \underline{0.2918} \\
& & Orig ($\omega_{CFG}=2$) & \textbf{22.44} & \textbf{0.7986} & 0.2846 & \textbf{0.7182} & \textbf{0.8963} & 1.80 & 0.2827 \\
& & CADS & 68.13 & 0.3952 & \textbf{0.5774} & 0.4525 & 0.3062 & \textbf{2.32} & 0.2618 \\
& & SPARKE & 27.87 & 0.7774 & 0.2220 & 0.3013 & 0.9739 & 1.67 & \textbf{0.2935} \\
& & Ours & 25.16 & 0.7588 & \underline{0.5254} & 0.6435 & 0.6422 & \underline{2.20} & 0.2885 \\
& & Ours Random & 22.41 & 0.7419 & 0.4911 & \underline{0.6990} & 0.7935 & 2.15 & 0.2879 \\
\midrule
\multirow{15}{*}{\textbf{MSCOCO}} 
& \multirow{5}{*}{\textbf{SD3.5}} 
  & Orig & 36.38 & \textbf{0.8208} & 0.2546 & 0.7928 & \textbf{1.1345} & 2.29 & \textbf{0.3129} \\
& & Orig ($\omega_{CFG}=2$) & \textbf{28.48} & 0.7864 & 0.4282 & 0.8222 & 0.9846 & 1.83 & 0.3047 \\
& & CADS & \underline{28.97} & 0.7138 & \underline{0.4604} & 0.7896 & 0.8232 & 2.30 & 0.3065 \\
& & SPARKE & 35.39 & \underline{0.7938} & 0.2730 & 0.7792 & \underline{1.0606} & 1.76 & 0.3174 \\
& & Ours & 29.56 & 0.6918 & \textbf{0.4916} & \underline{0.8586} & 0.7368 & \textbf{2.55} & 0.3055 \\
& & Ours Random & 29.75 & 0.7302 & 0.4406 & \textbf{0.8835} & 0.7866 & \underline{2.37} & \underline{0.3083} \\
\cmidrule(r){2-10}
& \multirow{5}{*}{\textbf{Flux.1-dev}} 
  & Orig & 40.22 & \underline{0.8086} & 0.2212 & 0.7678 & 1.0844 & 1.68 & \underline{0.3042} \\
& & Orig ($\omega_{CFG}=2$) & 35.89 & \textbf{0.8292} & 0.2934 & \textbf{0.8114} & \textbf{1.1465} & 1.73 & 0.3014 \\
& & CADS & 44.83 & 0.7321 & \underline{0.3326} & 0.7192 & 0.9499 & \textbf{2.43} & 0.2656 \\
& & SPARKE & 37.75 & 0.8066 & 0.2578 & 0.7816 & 1.050 & 1.82 & \textbf{0.3067} \\
& & Ours & \textbf{34.93} & 0.8082 & 0.2941 & \underline{0.7998} & \underline{1.1356} & 1.87 & 0.3025 \\
& & Ours Random & \underline{35.00} & 0.7854 & \textbf{0.3894} & 0.7992 & 1.0006 & \underline{2.20} & 0.3005 \\
\cmidrule(r){2-10}
& \multirow{5}{*}{\textbf{SANA1.5}} 
  & Orig & 51.53 & \underline{0.7060} & 0.2040 & 0.5700 & 0.7350 & 1.60 & \underline{0.3105} \\
& & Orig ($\omega_{CFG}=2$) & 53.40 & \textbf{0.7286} & 0.3150 & 0.6467 & \textbf{0.8328} & 1.81 & 0.3039 \\
& & CADS & 51.29 & 0.6992 & 0.3410 & 0.6104 & 0.5330 & 1.86 & 0.3064 \\
& & SPARKE & \textbf{42.72} & 0.6890 & 0.2490 & 0.6350 & \underline{0.7365} &1.67 & \textbf{0.3148} \\
& & Ours & 50.15 & 0.6943 & \textbf{0.4553} & \underline{0.6726} & 0.5255 & \textbf{2.08} & 0.3076 \\
& & Ours Random & \underline{47.20} & 0.6742 & \underline{0.3784} & \textbf{0.6906} & 0.7207 & \underline{1.94} & 0.3084 \\
\bottomrule
\end{tabular}
\end{table*}

Temporal Cutoff ($\tau$): Following the principle of early-stage intervention, we apply DC attenuation during the early generative regime, i.e., for timesteps satisfying $t<\tau$. Based on the trade-off between output variation and perceptual quality observed in our ablations, we find that intervening within the first 15--20$\%$ of the generative process yields stable diversity gains across the evaluated architectures.

Attenuation Strength ($\alpha$): The coefficient $\alpha \in (0, 1)$ regulates the magnitude of DC suppression. While a smaller $\alpha$ induces more pronounced structural deviations, excessive attenuation may lead to perceptual artifacts or loss of semantic coherence. To balance structural diversification with text-alignment preservation, we recommend setting $\alpha \in [0.2,0.5]$. This enables adjustable intervention intensity based on the desired level of variation. 

\begin{figure*}[ht]
    \centering
    \includegraphics[width=\linewidth]{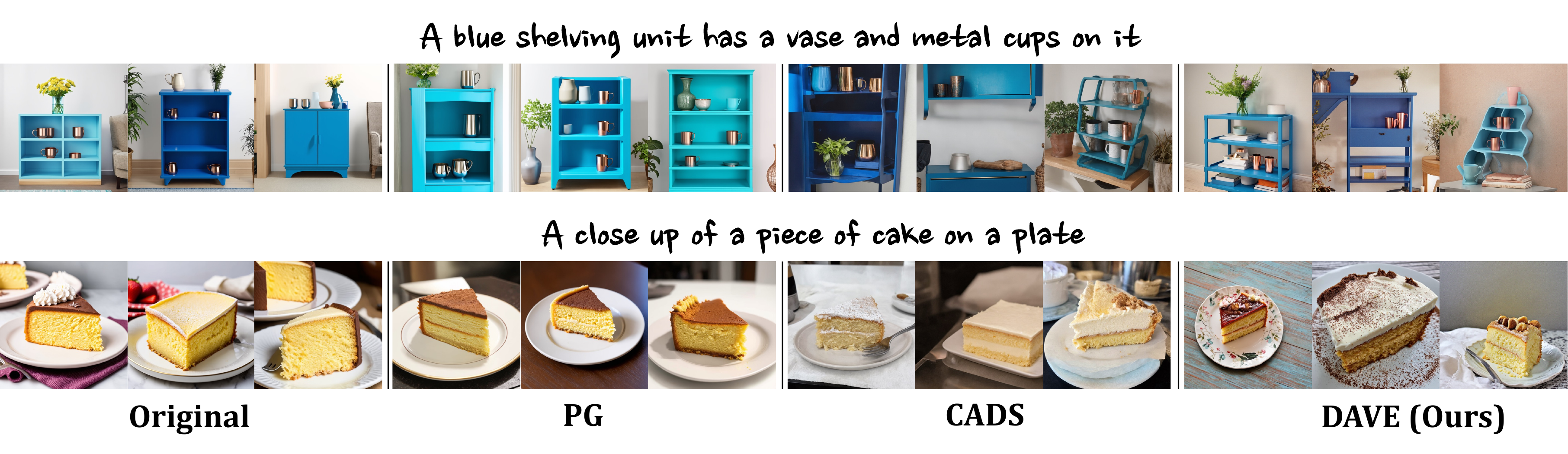}
    \caption{Qualitative comparison of in-batch diversity methods on Stable Diffusion 3. DAVE produces diverse samples with varied layouts and object appearances while preserving prompt consistency and visual fidelity.}
    \label{fig:qual}
\end{figure*}

Target block pool ($\mathcal{L}$): We define $\mathcal{L}$ as the set of Transformer blocks whose DC component shows strong cross-seed convergence, aligning across noise realizations. Empirically, DAVE is robust to block choice within this pool; interventions on blocks in $\mathcal{L}$ yield consistent diversity gains. Although the specific indices of $\mathcal{L}$ vary across architectures, these variations reflect structural differences rather than ad-hoc, per-model heuristics. Moreover, the identified blocks largely coincide with structurally significant modules reported in prior literature~\cite{li2026unraveling, yang2025splitflux}, further supporting the architectural grounding of our selection process.

\section{Experiments}
We validate the effectiveness of DAVE on representative models, including Stable Diffusion 3.5~\cite{esser2024scaling}, FLUX.1-dev, and SANA1.5~\cite{xie2025sana}, comparing our results against state-of-the-art diversity-enhancing samplers such as CADS~\cite{sadat2023cads}, Particle Guidance (PG)~\cite{corso2023particle}, SPARKE~\cite{jalali2026sparke}, SPELL~\cite{kirchhof2024shielded}, DiverseFlow~\cite{morshed2025diverseflow}, and Oscar~\cite{wu2025oscar}. Using prompts sampled from ImageNet~\cite{deng2009imagenet} and MS-COCO~\cite{lin2014microsoft}, we quantify performance across three primary dimensions: visual quality (FID, Precision, Density), generation diversity (Recall, Coverage, Vendi Score), and text-image alignment (CLIP score)~\cite{heusel2017gans, kynkaanniemi2019improved,naeem2020reliable, friedman2022vendi}. 
Detailed implementation settings are presented in Appendix~\ref{appx:settings}.

\subsection{Main Results}
The results in Table~\ref{tab:full_comparison} highlight the effectiveness of our approach. We quantitatively compare DAVE against (i) the original baseline, (ii) a low-CFG baseline ($\omega_{\text{CFG}}{=}2$, a standard heuristic for trading fidelity for diversity), and (iii) prior diversity-enhancement methods CADS and SPARKE (selected for their independence from in-batch settings). For evaluation, we generate 10 samples per prompt for 1,000 ImageNet class labels and 500 MS-COCO prompts. To demonstrate DAVE's robustness to block selection, we report results under both fixed-block and random-block settings. The fixed-block setting applies the intervention to a single predetermined layer, whereas the random-block setting dynamically samples from the identified pool $\mathcal{L}$. Although the random setting occasionally yields higher diversity, we adopt the fixed-block approach as our default in subsequent experiments to ensure strict reproducibility.

Our results demonstrate that DAVE consistently improves key diversity metrics across all three foundation models, outperforming the original sampler while remaining highly competitive with strong baselines. Crucially, these gains hold under both fixed and randomized block settings, confirming that our method is robust to the exact choice of intervention layer. Notably, while CADS achieves high diversity scores on ImageNet with Flux.1-dev and SANA1.5, it exhibits a severe degradation in precision and CLIP alignment. This suggests that its conditioning-perturbation strategy may be brittle when applied to certain flow-based architectures. In contrast, DAVE delivers substantial diversity improvements while reliably preserving text alignment and image quality, ultimately yielding a favorable diversity–fidelity trade-off (See Appendix~\ref{appx:pareto}, Figure~\ref{fig:pareto_analysis}).

\begin{table}[ht]
\centering
\caption{Quantitative results for in-batch diverse generation on Stable Diffusion 3.5.}
\label{tab:batch_comp}
\begin{tabular}{lccccc}
\toprule
\textbf{Method} & \textbf{FID} & \textbf{Prec} & \textbf{Rec} & \textbf{Vendi} & \textbf{CLIP} \\ \midrule
Orig   & 36.37  & 0.821 & 0.255 & 2.294  & 0.313  \\
PG  & 45.69  & 0.833 & 0.204 & 1.946  & 0.314  \\
SPELL   & 37.13  & 0.816 & 0.235 & 1.699  & 0.318 \\
DiverseFlow  & 39.78  & 0.818 & 0.209  & 1.557 & 0.312 \\
Oscar  & 35.87  & 0.827 & 0.245  & 1.698 & 0.312 \\
\midrule
\textbf{Ours}& 29.75 & 0.780 & 0.441 & 2.370  & 0.308\\ 
\bottomrule
\end{tabular}
\end{table}

\begin{figure}[ht]
    \centering
    \includegraphics[width=0.9\linewidth]{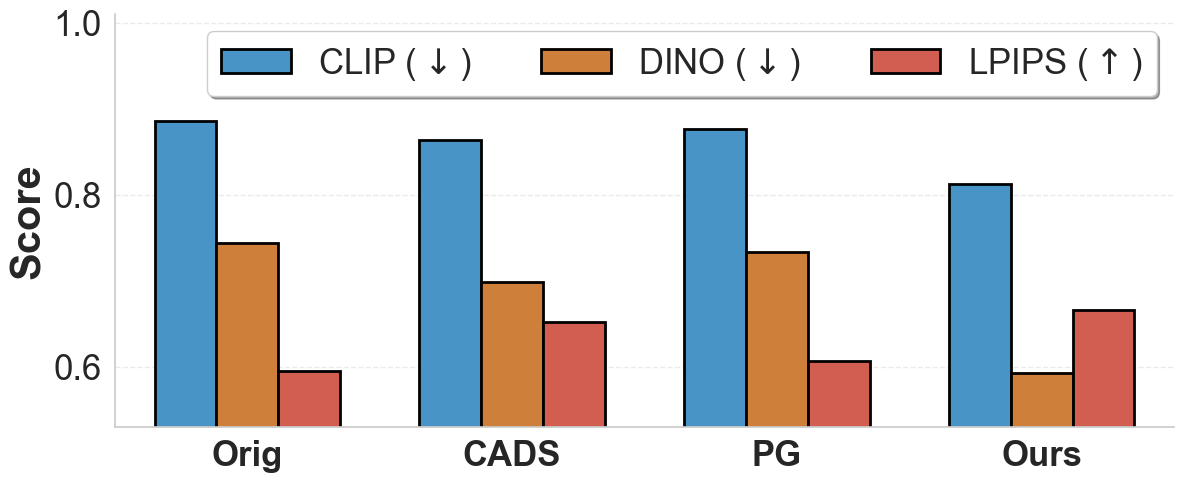}
    \caption{In-batch similarity comparison (batch size = 4). “Orig” denotes results from the vanilla Stable Diffusion 3.}
    \label{fig:in-batch-sim}
\vspace{-0.25em}
\end{figure}

\begin{figure*}[ht]
    \centering
    \includegraphics[width=\linewidth]{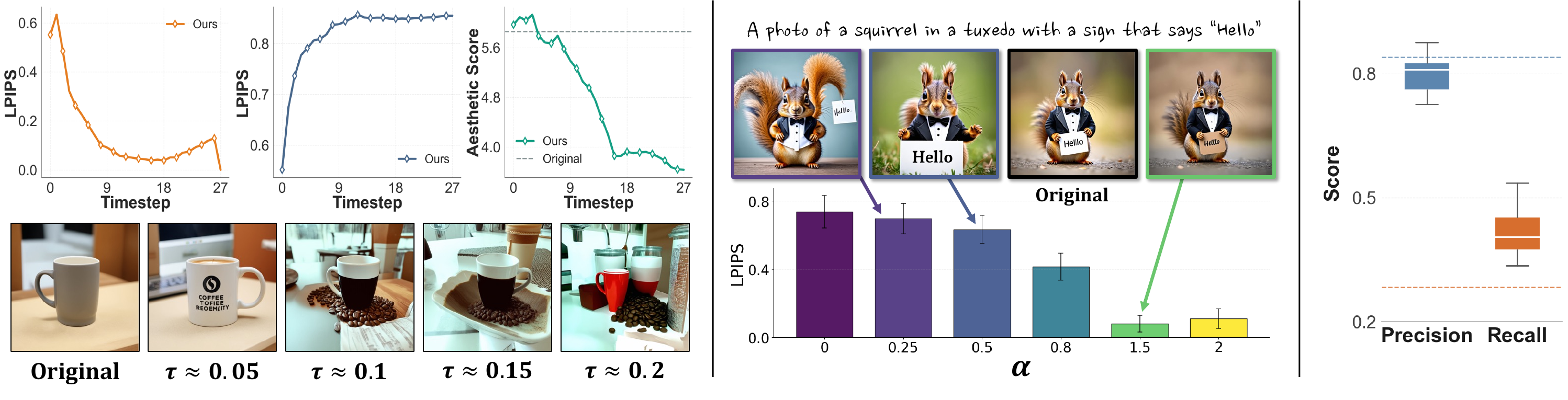}
    \caption{Ablations on hyperparameters. (Left) Effect of the temporal cutoff $\tau$ on the magnitude of image change and quality. (Middle) Image-change magnitude as a function of the attenuation strength $\alpha$. (Right) Robustness to block selection within the block pool $\mathcal{L}$.}
    \label{fig:exp_ablation}
\end{figure*}

We further evaluate the effectiveness of our method in the explicit in-batch generation setting. For this evaluation (Table~\ref{tab:batch_comp}), we compare DAVE against various recent baselines specifically designed for within-batch diversity: SPELL, DiverseFlow, OSCAR, and Particle Guidance (PG). Evaluated on Stable Diffusion 3 and 3.5 (batch size = 4) using MS-COCO prompts, DAVE consistently outperforms these baselines across all diversity metrics while incurring only marginal drops in image quality. Furthermore, as analyzed in Figure~\ref{fig:in-batch-sim}, DAVE achieves substantially lower feature-level similarity (CLIP/DINO) and higher perceptual distance (LPIPS), confirming its robust capability to maximize intra-batch diversity. Finally, qualitative results (Figure~\ref{fig:qual}) corroborate these findings: while CADS offers competitive visual variation but frequently compromises prompt adherence, DAVE consistently generates diverse, high-fidelity images that strictly respect the text condition.

\subsection{Ablation Studies}
\label{exp:abl}
In this section, we conduct a series of ablation studies to validate our parameter selections and provide deeper insights into the dynamics of DC attenuation. 

\vspace{-0.3em}
\paragraph{Temporal Cutoff $\tau$.} We evaluate the sensitivity of DAVE to the temporal cutoff $\tau$, which restricts the intervention to the initial $\lfloor \tau \cdot T \rfloor$ steps (where $T$ denotes the total number of timesteps). Consistent with our hypothesis, the results indicate that trajectory lock-in is indeed concentrated within the earliest stages of the generative process (Figure~\ref{fig:exp_ablation}-Left).
To further investigate this, we conducted a timestep-wise analysis by partitioning the denoising trajectory into early, middle, and late phases (5 steps each), as summarized in Table~\ref{tab:timestep_analysis}. Our findings show that cross-seed DC similarity exhibits a clear decreasing trend toward later timesteps. Additionally, the DC energy ratio is highest in the early regime, where our intervention triggers the most significant structural divergence. While accumulating interventions over more timesteps increases structural variation, it eventually introduces a trade-off with image quality. Notably, at the recommended threshold of $\tau = 0.15$ (approximately the first 4 steps in Stable Diffusion 3), the model maintains its structural integrity and even shows a slight improvement in aesthetic scores compared to the unperturbed baseline. This suggests that early-stage DC attenuation successfully promotes diversity without compromising the subsequent refinement of local details.

\vspace{-0.3em}
\begin{table}[!ht]
\centering
\caption{Step-wise analysis of DC characteristics and perceptual impact on SD3. \textbf{CS-Sim}, \textbf{ER}, and \textbf{LPIPS} denote Cross-seed DC similarity, DC energy ratio, and LPIPS distance, respectively.}
\label{tab:timestep_analysis}
\begin{tabular}{lccc}
\toprule
\textbf{Steps} & \textbf{CS-Sim} & \textbf{ER} & \textbf{LPIPS} \\
\midrule
Early (0–4)   & 0.975 & 0.512 & 0.717 \\
Mid (12–16)   & 0.893 & 0.304 & 0.113 \\
Late (24–27)  & 0.756 & 0.298 & 0.095 \\
\bottomrule
\end{tabular}
\vspace{-0.2em}
\end{table}

\paragraph{Attenuation Strength $\alpha$.} The parameter $\alpha$ regulates the intensity of structural manipulation. As shown in Figure~\ref{fig:exp_ablation}, lower $\alpha$ values yield higher LPIPS scores, indicating greater structural deviation from the baseline generation. While more aggressive attenuation enhances diversity, settings below $\alpha \approx 0.2$ can compromise structural integrity. Interestingly, we observe an asymmetric response: amplifying the DC component ($\alpha > 1$) results in negligible structural changes. This validates our interpretation of the DC mode as a structural anchor—its suppression releases the model from trajectory lock-in, whereas further amplification imposes no additional constraints on the already established generative pathways. Further sensitivity analyses are detailed in Appendix~\ref{appx:sensitivity}.

\vspace{-0.45em}
\paragraph{Target Block Pool $\mathcal{L}$.} We identify the target pool $\mathcal{L}$ by isolating Transformer blocks that exhibit a high degree of cross-seed DC invariance. This phenomenon indicates a deterministic bottleneck, where internal DC components remain nearly identical despite differing initial noise realizations. To objectively quantify this alignment, we apply a $t$-test with Benjamini-Hochberg FDR correction~\cite{benjamini1995controlling} ($p < 0.05$) to blocks maintaining a cross-seed cosine similarity of $\ge 0.99$. In Stable Diffusion 3, this criterion yields a specific subset: $\mathcal{L}=\{0,2,4,5,\ldots,17\}$. Empirically, applying DAVE to any candidate block within $\mathcal{L}$ yields a consistent boost in Recall with only a marginal reduction in Precision (Figure~\ref{fig:exp_ablation}-Right). These findings confirm that early-stage DC alignment is a critical prerequisite for the intervention to successfully break trajectory lock-in. By explicitly targeting this representational property, DAVE establishes a stable framework for diversity enhancement that is highly robust to block selection.

\begin{table}[ht]
\centering
\caption{Cross-dataset robustness of the $\mathcal{L}$ selection strategy for diversity enhancement.}
\label{tab:diversity_comparison}
\begin{tabular}{lcc}
\toprule
\textbf{Method} & \textbf{Vendi ($\uparrow$)} & \textbf{CLIP ($\uparrow$)} \\
\midrule
Original       & 1.4541 & 0.3088 \\
DAVE           & \textbf{2.0275} & 0.3061 \\
Cross-dataset  & 1.8840 & 0.3061 \\
\bottomrule
\end{tabular}
\vspace{-0.5em}
\end{table}

We further examined the robustness of the target block pool $\mathcal{L}$ with respect to the calibration prompt set. To test this, we compared pools independently constructed from 20 randomly sampled MS-COCO captions and 20 ImageNet labels. In Stable Diffusion 3.5, these independently derived pools exhibited near-complete overlap, achieving an average pairwise Jaccard similarity of 0.98. We also conducted a cross-dataset evaluation by applying the pool $\mathcal{L}$ identified from one source to the other; the diversity gains remained robust, showing only a marginal performance gap compared to the in-dataset setting (Table~\ref{tab:diversity_comparison}). Together, these results demonstrate the intrinsic robustness of our $\mathcal{L}$ selection strategy. Consequently, this unified rule—defining the target pool via early-stage cross-seed DC alignment—is inherently dataset-agnostic, enabling seamless adaptation across diverse domains with a single, one-time calibration rather than exhaustive per-dataset tuning.
\vspace{-0.3em}
\section{Discussion}

\subsection{Trajectory Analysis}

We empirically verify that DAVE mitigates trajectory lock-in, allowing generative trajectories to evolve along more diverse paths. To evaluate sample-to-sample separation, we compare the pairwise cosine distances along the generative trajectory between baseline SD3 samples and those generated with DAVE. As shown in Figure~\ref{fig:trajectory} (Right), DAVE leads to a progressive increase in the average inter-sample distance as the generation proceeds. This indicates that our early-stage intervention effectively expands the accessible state space, facilitating richer structural branching in later steps.

\begin{figure}[ht]
    \centering
    \includegraphics[width=\linewidth]{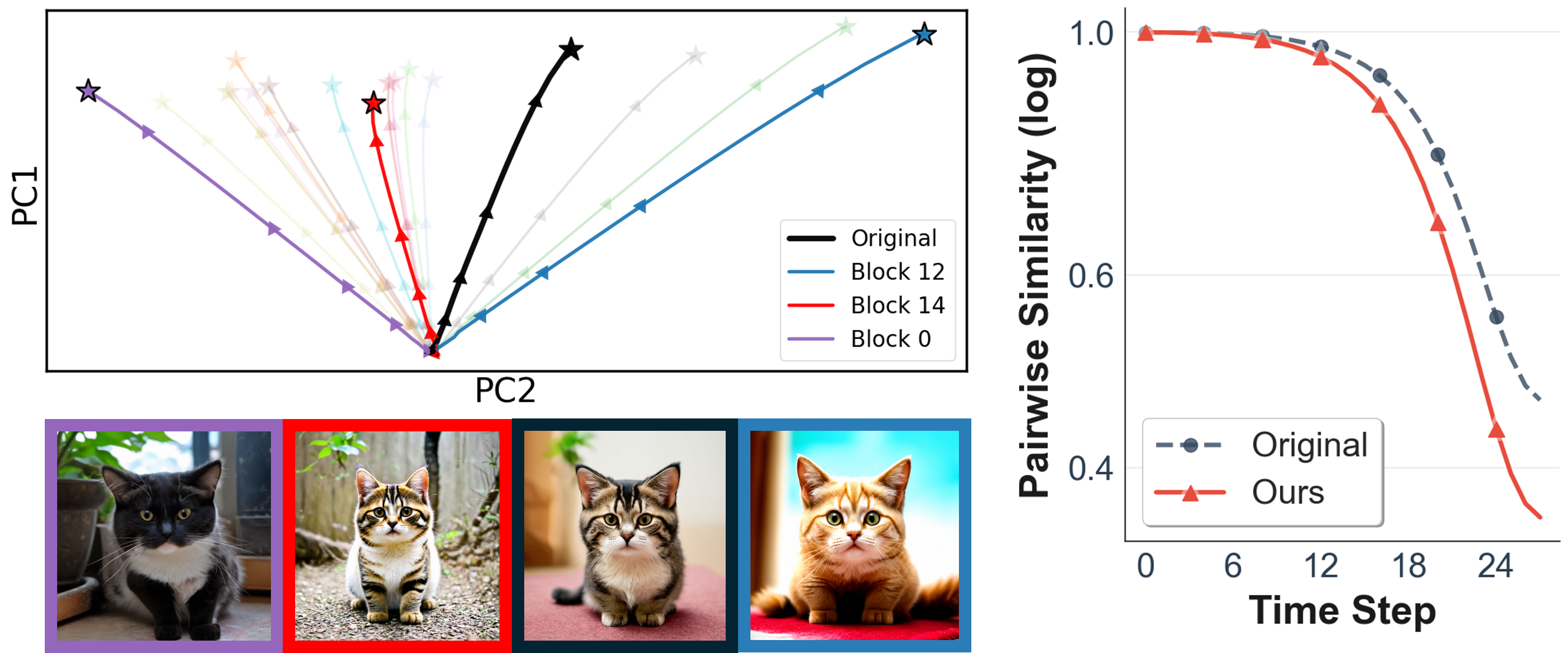}
    \caption{(Left) Latent trajectory visualization with generated outputs. (Right) Pairwise latent similarities across steps.}
    \label{fig:trajectory}
\end{figure}

Furthermore, DAVE provides structural flexibility through selective intervention across Transformer blocks $\ell \in \mathcal{L}$, enabling entirely diverse outcomes from identical initial noise. As qualitatively confirmed by the visualized trajectories and resulting images in Figure~\ref{fig:trajectory} (Left), DAVE effectively prevents premature convergence to a single deterministic path, fostering a much broader and more expansive trajectory evolution.

\begin{table}[ht]
\centering
\caption{Quantitative results with DAVE in SD3.5-Large-Turbo.}
\label{tab:distilled}
\begin{tabular}{lcccc}
\toprule
\textbf{Method} & \textbf{Precision} & \textbf{Recall} & \textbf{Vendi} & \textbf{CLIP} \\
\midrule
Orig             & \textbf{0.816}            & 0.384        & 1.383    & 0.205   \\
CADS             & 0.785            & 0.423        & 1.565    & 0.206   \\
PG            & 0.788      & 0.418        & 1.422    & 0.211   \\
Ours            & 0.798          & \textbf{0.431}      & \textbf{1.643} & 0.205     \\
\bottomrule
\end{tabular}
\end{table}

\subsection{DAVE on Distilled Models}
A notable advantage of DAVE is its inherent compatibility with distilled models. Because the method intervenes directly at the level of internal representations rather than modifying the scheduler or sampling rules, it is straightforward to apply even when the sampling procedure is heavily streamlined by distillation. We validate this capability on the distilled model SD3.5-Large-Turbo~\cite{esser2024scaling}, where DAVE consistently improves image diversity, as demonstrated by both quantitative metrics (Table~\ref{tab:distilled}) and qualitative visualizations (Figure~\ref{fig:distilled}).

\begin{figure}[ht]
    \centering
    \includegraphics[width=\linewidth]{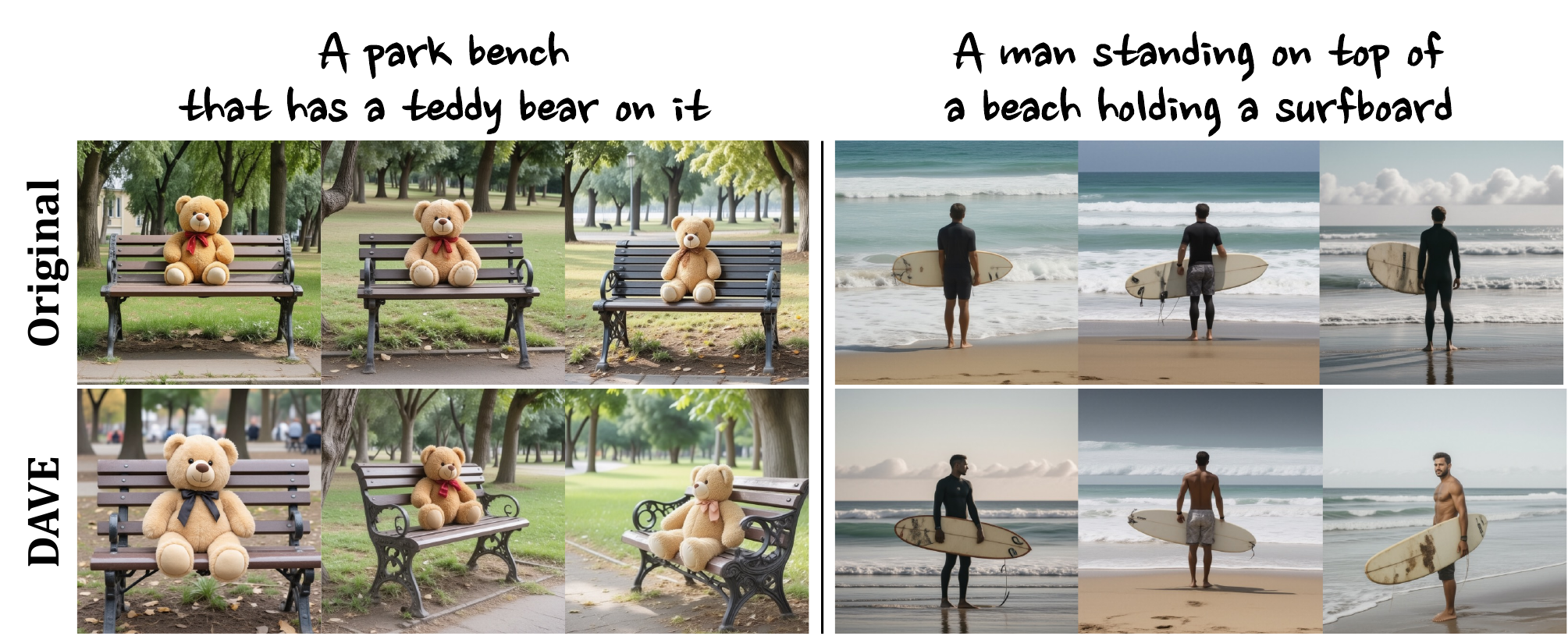}
    \caption{Qualitative results with DAVE in SD3.5-Large-Turbo.}
    \label{fig:distilled}
\end{figure}

\vspace{-0.2em}
\subsection{Block-wise Analysis}
\begin{figure}
    \centering
    \includegraphics[width=0.95\linewidth]{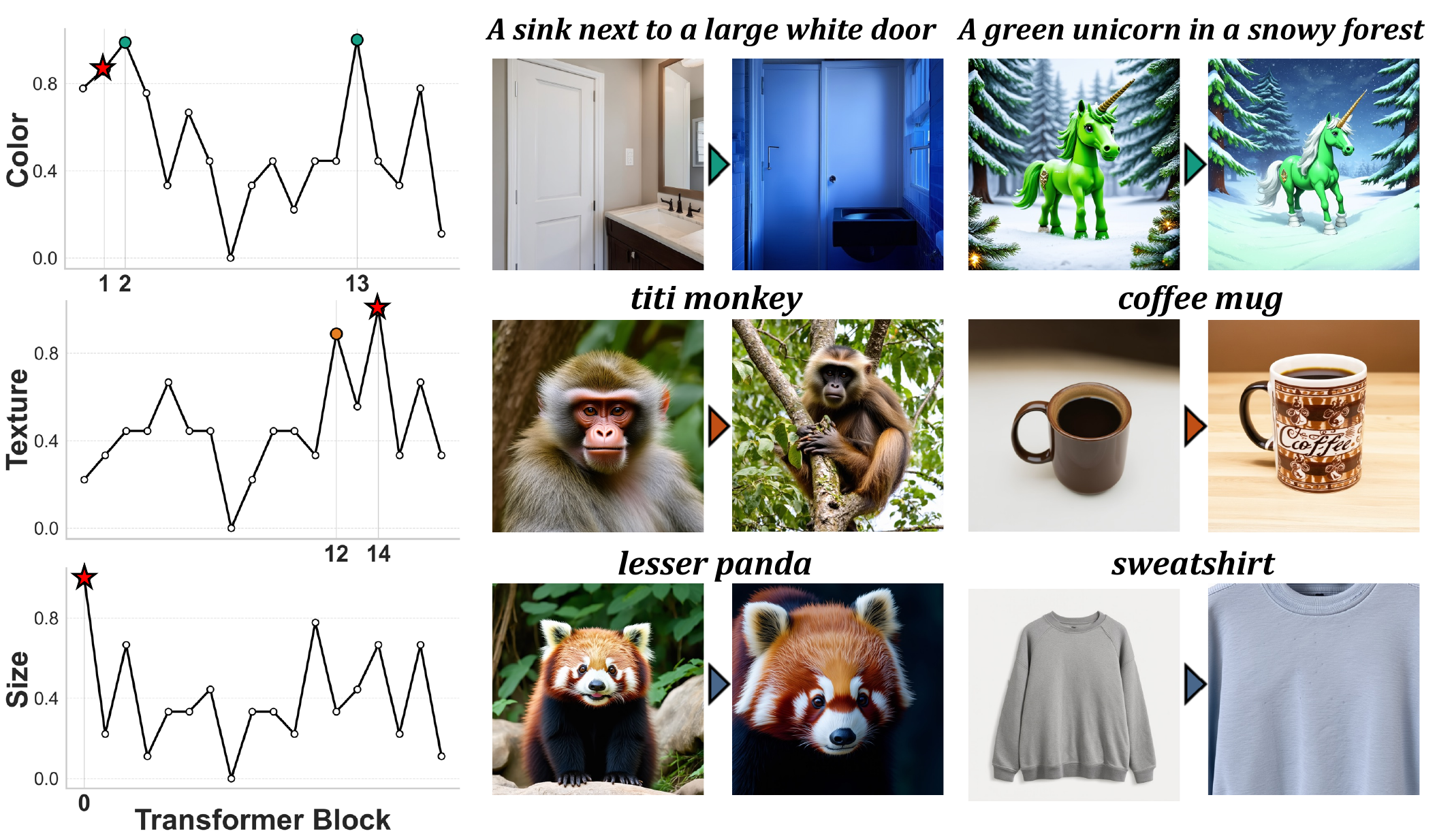}
    \caption{Block-wise change direction analysis for DAVE in Stable Diffusion 3. (Left) VLM-based attribute scores quantifying the change direction for each block. (Right) Representative output examples showing the block-specific changes.}
    \label{fig:block-wise}
\end{figure}

In further analysis of DAVE, we examine block-wise manipulation and find a consistent tendency: some blocks repeatedly induce diversity along specific attribute directions. We quantify these shifts by scoring changes relative to the original images across three attributes—Color, Size, Texture—using Gemini 3 Flash. On Stable Diffusion 3.5, structural divergence (DAVE’s primary objective) is reliably achieved across blocks, yet attribute changes are block-dependent: Blocks 1–3 predominantly vary color, Block 0 often alters subject scale/size, and Block 14 tends to produce coarser texture. Additional model cases are deferred to Appendix~\ref{appx:block-wise}.

These findings suggest that DC attenuation may interact with internal blocks in a block-dependent manner, offering a mechanistic perspective on how diversity emerges from localized representation-level interventions. Although inducing attribute-specific diversity is not a primary objective of our method, the observed block signatures indicate a potential route to controllable diversity when paired with deeper architectural insight.

\begin{table}[ht]
\centering
\caption{Comparison between Original baseline and DAVE across different CFG scales on SD3. (default $\omega_{CFG}=7$)}
\label{tab:results}
\begin{tabular}{clcccc}
\toprule
\textbf{CFG} & \textbf{Method} & \textbf{Prec} & \textbf{Rec} & \textbf{CLIP} & \textbf{Vendi} \\ \midrule
\multirow{2}{*}{3} & Orig & 0.860 & 0.052 & 0.313 & 2.540 \\
 & Ours & 0.752 & 0.165 & 0.310 & 2.307 \\ \midrule
\multirow{2}{*}{5} & Orig & 0.883 & 0.016 & 0.314 & 2.975 \\
 & Ours & 0.795 & 0.133 & 0.312 & 2.995 \\ \midrule
\multirow{2}{*}{\textbf{7}} & Orig & 0.871 & 0.015 & 0.316 & 3.385 \\
 & Ours & 0.735 & 0.132 & 0.313 & 3.828 \\ \midrule
\multirow{2}{*}{10} & Orig & 0.854 & 0.035 & 0.318 & 4.189 \\
 & Ours & 0.674 & 0.162 & 0.307 & 5.370 \\ \bottomrule
\end{tabular}
\end{table}

\subsection{Compatibility with Classifier-Free Guidance}
DAVE is compatible with CFG and can be naturally combined with it to further enhance performance. Across all CFG scales, DAVE substantially improves Recall, and at moderate-to-high CFG scales it also improves Vendi, while keeping CLIP nearly unchanged. This is practically important because higher CFG typically improves fidelity at the cost of diversity, whereas DAVE helps recover diversity under such settings without harming text alignment. Note that our main experiments already follow the default CFG setting of each base model (Table~\ref{tab:dave_params}), showing that DAVE works under standard practical configurations.

\subsection{DAVE with Highly Constrained Prompts}
We evaluated DAVE on PartiPrompts~\cite{yu2022scaling}, a benchmark categorized by varying complexity levels. Although complex prompts naturally allow less variation than simpler ones, DAVE consistently improves Vendi-score over the original model across all prompt complexity levels. Notably, CLIP-score changes remain minimal (absolute gap $\leq$ 0.01), demonstrating that DAVE enhances meaningful diversity without compromising prompt alignment, even under high structural constraints.

\begin{table}[ht]
\centering
\caption{Evaluation across prompt complexities. Bold indicates the best performance in each metric.}
\label{tab:complexity_analysis}
\small
\begin{tabular}{llcccc}
\toprule
\textbf{Metric} & \textbf{Method} & \textbf{Basic} & \textbf{Simple} & \textbf{Fine} & \textbf{Comp.} \\
\midrule
\multirow{2}{*}{Vendi} & Original & 2.051 & 1.575 & 1.524 & 1.511 \\
                                  & DAVE     & \textbf{2.466} & \textbf{2.023} & \textbf{2.009} & \textbf{2.006} \\
\midrule
\multirow{2}{*}{CLIP}  & Original & 0.288 & \textbf{0.344} & \textbf{0.331} & \textbf{0.321} \\
                                  & DAVE     & \textbf{0.298} & 0.337 & 0.329 & 0.311 \\
\bottomrule
\end{tabular}
\end{table}

\subsection{Computational Efficiency of DAVE}
DAVE uses minimal computational overhead while significantly improving generation diversity. Unlike prior approaches that rely on additional optimization, feature-bank maintenance, or explicit inter-sample interactions during sampling, DAVE performs lightweight representation-level modulation directly on intermediate representations during the early generation stage. Detailed complexity analyses are provided in Appendix~\ref{appx:computation}.
\section{Conclusion}

In this paper, we addressed the critical issue of limited generation diversity in flow-based text-to-image models, identifying the rapid homogenization of the DC component as a primary bottleneck. To mitigate this, we introduced DC Attenuation for diVersity Enhancement (DAVE), a streamlined representation-level intervention. Our extensive experiments demonstrate that DAVE effectively broadens the generative range, yielding diverse images without compromising prompt alignment, thereby maintaining a highly favorable diversity–fidelity trade-off. As a training-free method with negligible overhead, DAVE offers a practical and scalable framework for unlocking generative models from premature structural lock-in.

\section*{Acknowledgements}
This work was supported by the Institute for Information \& Communications Technology Planning \& Evaluation (IITP) grant funded by the Korea government (MSIT) (RS-2019-II190075, Artificial Intelligence Graduate School Support Program (KAIST); RS-2024-00457882, AI Research Hub Project; RS-2022-II220984, Development of Artificial Intelligence Technology for Personalized Plug-and-Play Explanation and Verification of Explanation) and by the InnoCORE program of the Ministry of Science and ICT (N10250156).

\section*{Impact Statement}
This paper presents work whose goal is to advance the field of Machine Learning. There are many potential societal consequences of our work, none of which we feel must be specifically highlighted here.

\bibliography{reference_icml}
\bibliographystyle{icml2026}

\newpage
\appendix
\onecolumn
\section{Implementation Details}
\label{appx:settings}

\paragraph{Models and Datasets.}
We conduct experiments on four representative flow-based text-to-image
generation models: Stable Diffusion~3 (SD3), Stable Diffusion~3.5 (SD3.5),
FLUX.1-dev, and SANA1.5.
All methods are evaluated on ImageNet and MS-COCO benchmarks.
For ImageNet, we generate 10 samples for each of the 1,000 class-label prompts,
resulting in 10K images per method.
For MS-COCO, we randomly sample 500 captions from the val2017 split and
generate 10 images per prompt, yielding a total of 5K images.

\paragraph{Evaluation Metrics.}
We evaluate performance along three dimensions: visual quality, diversity, and text--image alignment. Visual quality and diversity is measured using FID, Precision, Recall, Coverage, and Density, computed in a shared Inception feature space. For ImageNet, $N_{\text{fake}} = 10\mathrm{K}$ generated images are compared against $N_{\text{real}} = 10\mathrm{K}$ real images from the corresponding validation split. For MS-COCO, we similarly use $N_{\text{fake}} = 5\mathrm{K}$ generated images and $N_{\text{real}} = 5\mathrm{K}$ real images.

Diversity is assessed using the Vendi Score~\cite{friedman2022vendi}, computed on the cosine-similarity Gram matrix of L2-normalized CLIP ViT-B/32 image embeddings~\cite{radford2021learning}. Text--image alignment is evaluated using the CLIP Score~\cite{hessel2021clipscore}, computed as the cosine similarity between CLIP ViT-B/32 image and text embeddings. Both metrics are computed independently for each prompt and reported as averages across all prompts.

\subsection{Experimental Settings}
\subsubsection{Block Selection}

\begin{figure}[ht]
    \centering
    \includegraphics[width=\linewidth]{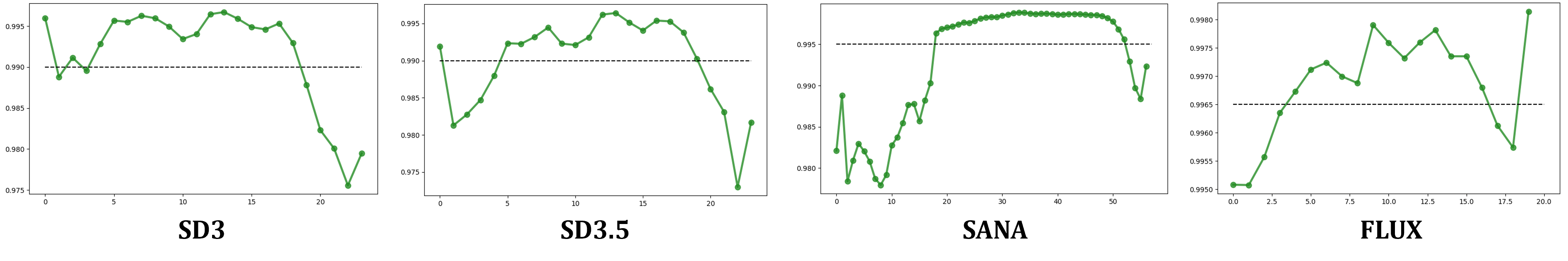}
    \caption{Mean cosine similarity of DC components of hidden representations $h_t^{(\ell)}$ across 100 random noise seeds, measured at each Transformer block.
Higher similarity indicates stronger cross-seed convergence.}
    \label{fig:block-selection}
\end{figure}

To identify suitable intervention blocks for DAVE, we measure the mean
cosine similarity of DC components across 100 different noise seeds at
each Transformer block.
As shown in Figure~\ref{fig:block-selection}, early and intermediate blocks
exhibit consistently high cosine similarity, indicating strong cross-seed
alignment and pronounced representational lock-in.

\paragraph{Block Selection Rationale.}
Based on this analysis, we select intervention blocks from regions where
the DC component becomes nearly seed-invariant.
For SD3, blocks in the set
$\mathcal{L} \in \{0, 2, 4, 5, \dots, 17\}$ exhibit high cosine similarity and
are used as candidate blocks in the random-block setting.
For SD3.5, a similar trend is observed for
$\mathcal{L} \in \{0, 5, \dots, 15\}$.
In the fixed-block configuration for both models, we select a representative
block ($\mathcal{L}=5$) within this high-alignment regime.

For FLUX.1-dev and SANA1.5, high DC cosine similarity persists over a broader range
of blocks, with peak alignment occurring at later blocks compared to
SD-based models.
Accordingly, we select $\mathcal{L}=30$ for FLUX.1-dev and $\mathcal{L}=13$ for
SANA1.5 in the fixed-block setting, and define model-specific candidate pools
($\mathcal{L} \in [19,40]$ for FLUX.1-dev and $\mathcal{L} \in [4,16]$ for SANA1.5)
for the random-block configuration.

\paragraph{Statistical Criterion.}
To quantify cross-seed invariance of DC components, we compute pairwise
cosine similarity across 100 noise seeds at each block.
Candidate blocks are identified using a two-sided $t$-test with
Benjamini--Hochberg FDR correction~\cite{benjamini1995controlling}
($p < 0.05$), and we further require the mean cosine similarity to exceed
0.99.
Although this criterion identifies a broad set of high-alignment blocks,
we empirically observe that interventions applied to final-stage blocks
have limited impact on generation diversity.
These late blocks are therefore excluded from the candidate pools.

\subsubsection{Evaluation Settings}

\begin{table}[t]
\caption{DAVE parameter settings for quantitative evaluation. ``$\omega_{CFG}$'' denotes the default classifier-free guidance scale of each base model, used in all main experiments unless stated otherwise.}
\label{tab:dave_params}
\centering
\begin{tabular}{llcccc}
\toprule
Model & Setting & $\omega_{CFG}$ & $\tau$ & $\mathcal{L}$ & $\alpha$ \\
\midrule
SD3   & Fixed  & 7.0 & 0.15 & 5        & 0.5 \\
      & Random & 7.0 & 0.15 & 0,2,4--17 & 0.5 \\
SD3.5 & Fixed  & 7.0 & 0.15 & 5        & 0.5 \\
      & Random & 7.0 & 0.15 & 0,5--15   & 0.5 \\
FLUX.1-dev  & Fixed  & 3.5 & 0.2  & 30       & 0.2 \\
      & Random & 3.5 & 0.2  & 19--40    & 0.2 \\
SANA1.5  & Fixed  & 4.5 & 0.2  & 13       & 0.2 \\
      & Random & 4.5 & 0.2  & 4--16     & 0.2 \\
\bottomrule
\end{tabular}
\end{table}

\paragraph{Quantitative Evaluation Settings}
For quantitative evaluation, we consider two configurations:
a fixed-block setting and a random-block setting.
In the fixed-block setting, DC attenuation is applied to a single
representative block selected from the high-alignment regime.
In the random-block setting, intervention blocks are uniformly sampled
from model-specific candidate pools identified by the block selection
analysis.
This configuration is designed to demonstrate that DAVE is not tied to a
specific block, but remains effective across diverse blocks satisfying
the proposed criteria.
The complete parameter settings are summarized in
Table~\ref{tab:dave_params}.

\paragraph{In-batch Diversity Evaluation Settings}
For in-batch diversity evaluation, we use 200 randomly sampled captions
from the MS-COCO val2017 split.
For each prompt, we generate images using 5 fixed random seeds, and for
each seed we produce 4 samples, resulting in a total of 4K generated images. To ensure fair comparison, the same set of random seeds is used for all methods.
We compare the original sampler, CADS, Particle Guidance (PG), SPELL, DiverseFlow, OSCAR, and our method. In-batch diversity is measured using CLIP similarity, DINO feature similarity, and LPIPS, where lower similarity values indicate higher diversity within a batch.

\section{Sensitivity Analysis}
\label{appx:sensitivity}

\begin{table*}[h]
\centering
\caption{Sensitivity analysis of DAVE across hyperparameters $\alpha$ (intervention strength) and $\tau$ (intervention window). Metrics include Precision (Prec), Recall (Rec), Vendi score (Vendi), and CLIP score (CLIP).}
\label{tab:sensitivity_full}
\begin{tabular}{lccccclccccc}
\toprule
\multicolumn{5}{c}{\textbf{Effect of Intervention Strength ($\alpha$)}} & & \multicolumn{5}{c}{\textbf{Effect of Intervention Window ($\tau$)}} \\
\cmidrule{1-5} \cmidrule{7-11}
$\alpha$ & \textbf{Prec} & \textbf{Rec} & \textbf{Vendi} & \textbf{CLIP} & & $\tau$ & \textbf{Prec} & \textbf{Rec} & \textbf{Vendi} & \textbf{CLIP} \\
\midrule
0.3 & 0.664 & 0.561 & 2.42 & 0.302 & & 0.00 & 0.821 & 0.254 & 2.29 & 0.313 \\
0.4 & 0.676 & 0.554 & 2.62 & 0.302 & & 0.10 & 0.698 & 0.456 & 2.53 & 0.307 \\
\textbf{0.5} & \textbf{0.692} & \textbf{0.492} & \textbf{2.55} & \textbf{0.306} & & \textbf{0.15} & \textbf{0.692} & \textbf{0.492} & \textbf{2.55} & \textbf{0.306} \\
0.6 & 0.723 & 0.461 & 2.41 & 0.308 & & 0.20 & 0.622 & 0.562 & 2.61 & 0.303 \\
0.7 & 0.783 & 0.407 & 2.36 & 0.312 & & 0.25 & 0.588 & 0.583 & 2.63 & 0.299 \\
\bottomrule
\end{tabular}
\end{table*}

DAVE operates in a stable regime with an explicit diversity–fidelity trade-off, rather than as a brittle single-point intervention. The added ablations show that nearby tested settings around our practical defaults ($\alpha$=0.5, $\tau$=0.15) do not trigger abrupt quality collapse across any evaluated metrics, but instead produce a gradual trade-off between Precision/CLIP and Recall/Vendi. We therefore view $\alpha$ and $\tau$ as coarse intervention knobs controlling the strength and duration of early DC attenuation, rather than fragile hyperparameters that require fine-grained tuning.

\section{Trade-off Analysis}
\label{appx:pareto}
Figure~\ref{fig:pareto_analysis} presents a Pareto analysis between diversity and fidelity metrics across different diversity-enhancement methods on SD3 and SD3.5. We compare the trade-off trajectories of the original sampler, CADS, SPARKE, and DAVE by jointly visualizing diversity (Vendi score) against perceptual quality (Aesthetic score) and semantic alignment (CLIP score). Each trajectory is obtained by sweeping the diversity-control parameter of each method, where points correspond to different attenuation steps in DAVE, corruption strengths in CADS, and guidance frequencies in SPARKE. 

As shown in the figure, existing methods often improve diversity at the cost of image fidelity or text--image alignment, leading to unfavorable trade-offs. In contrast, DAVE achieves substantially higher diversity while preserving competitive aesthetic quality and semantic consistency. Moreover, DAVE exhibits smooth and stable trajectory transitions across attenuation strengths, indicating controllable diversity enhancement without severe structural degradation.

\begin{figure}[ht]
    \centering
    \includegraphics[width=\linewidth]{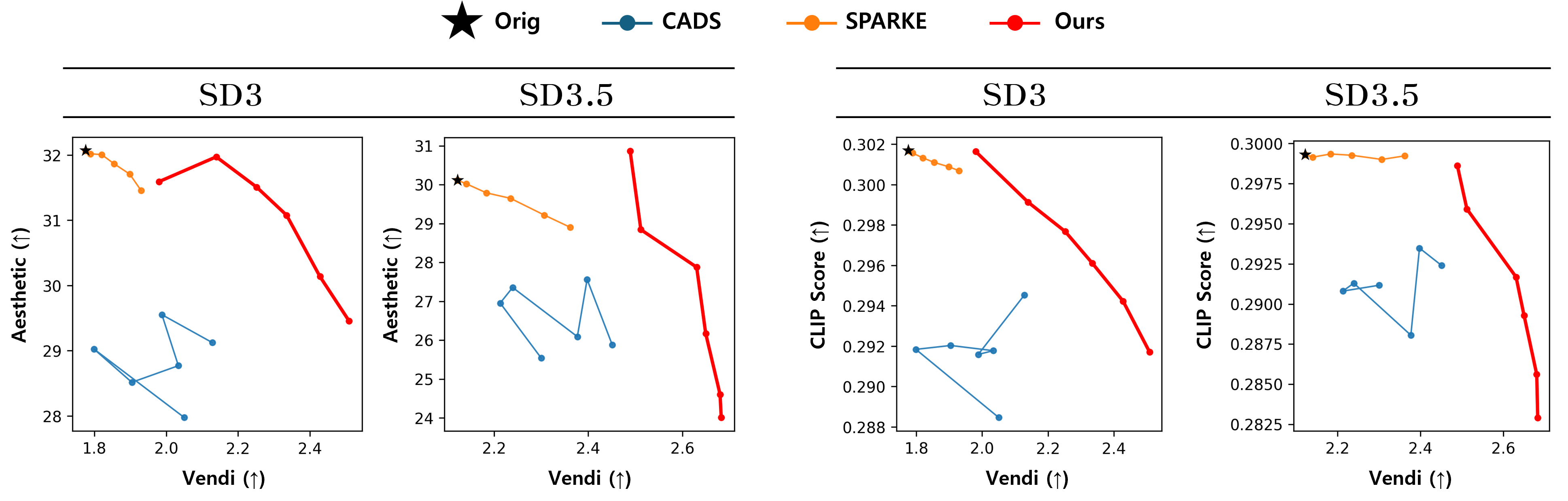}
    \caption{Pareto analysis between diversity and fidelity metrics. Compared to prior methods, DAVE achieves a more favorable trade-off frontier, improving diversity while preserving image quality and semantic alignment across SD3 and SD3.5.}
    \label{fig:pareto_analysis}
\end{figure}

\section{Pseudo Code}
\label{appx:code}
The pseudocode for DAVE is provided in Algorithm~\ref{alg:dave}. The official implementation is available at \url{https://github.com/daheekwon/DAVE}.

\begin{algorithm}[ht]
\caption{DAVE: DC component Attenuation for diVersity Enhancement}
\label{alg:dave}
\begin{algorithmic}[1]
\REQUIRE Hidden states $H \in \mathbb{R}^{D \times C}$, Current timestep $t$, Block index $\ell$
\REQUIRE Hyperparameters: Cutoff $\tau$, Target block pool $\mathcal{L}$, Strength $\alpha$
\STATE $H \leftarrow \mathrm{TransformerBlock}_\ell(H)$
\IF{$(t < \tau) \AND (\ell \in \mathcal{L})$}
    \STATE $\mathbf{\mu} \leftarrow \frac{1}{D}\sum_{d=1}^{D} H_{d,:}$ \COMMENT{Compute spatial mean (DC component)}
    \FOR{$d = 1$ \textbf{to} $D$}
        \STATE $H_{d,:} \leftarrow H_{d,:} + (\alpha - 1) \cdot \mathbf{\mu}$ \COMMENT{Attenuate DC component}
    \ENDFOR
\ENDIF
\ENSURE Enhanced hidden states $H$
\end{algorithmic}
\end{algorithm}

\section{Theoretical Motivation for Early DC Lock-in}
\label{appx:proof}

We provide a formal motivation for the intuition behind our method for a Transformer-based Flow Matching model. We divide our analysis into two parts: first, we characterize a mechanism by which the DC component of intermediate representations can become seed-invariant in the early high-noise sampling regime ($t\approx 0$); second, we formalize why early contraction can limit later trajectory separation under locally Lipschitz dynamics.

\subsection{Problem Setup and Definitions}

\begin{definition}[Spaces and Mapping]
Let $\mathcal{X}$ be the image space and $\mathcal{H} = \mathbb{R}^{D \times C}$ be the latent space, where $D$ is the number of tokens and $C$ is the channel dimension. We define an abstract representation map induced by a fixed block of the pre-trained generator, $\mathcal{E}: \mathcal{X} \to \mathcal{H}$, mapping a sample image $X \in \mathcal{X}$ to its block-level hidden representation $h \in \mathcal{H}$. 

Here $X$ is the state $x_t$ evolved by the ODE in Eq~\ref{eq:fm_ode}, and $\mathcal{E}$ to a fixed block $\ell \in \mathcal{L}$ (Eq~\ref{eq:transformer_block}), so that $h = \mathcal{E}(x_t)$ is precisely the block-level representation $h^{(\ell)}_t$ analyzed in Figure~\ref{fig:method1} and Table~\ref{tab:timestep_analysis}. We suppress $\ell$ and write $h_t := \mathcal{E}(x_t)$; the separation bounds below thus concern these representations across seeds.
\end{definition}

\begin{definition}[Distributions and Coupling]
We consider the Flow Matching objective defined over two distributions:
\begin{itemize}
    \item \textbf{Data Distribution ($X_1$):} $X_1 \sim p_{\text{data}}(X|c)$, conditioned on text $c$. The corresponding latent target is $h_1 = \mathcal{E}(X_1)$.
    \item \textbf{Noise Distribution ($X_0$):} $X_0 \sim p_{\text{noise}}(X)$. The corresponding latent noise is $h_0$.
\end{itemize}
We adopt the \textbf{1-Rectified Flow} framework~\cite{liu2022flow}, which constructs a straight-line probability path between distributions. In the standard training setup (prior to any reflow steps), the coupling between the source and target is \textbf{independent}, meaning the joint distribution factorizes as:
\begin{equation}
    \pi(X_0, X_1) = p_{\text{noise}}(X_0) p_{\text{data}}(X_1|c)
\end{equation}
This independence implies that the initial noise sample $X_0$ contains no information about the target data sample $X_1$ ($X_0 \perp X_1$).
\end{definition}

\begin{definition}[Lipschitz Continuity of Target Estimation]
Let $\Psi(h) = \mathbb{E}_{X_1}[ \mathcal{E}(X_1) \mid h, c ]$ be the conditional expectation. We assume that $\Psi$ is $K$-Lipschitz continuous with respect to the latent metric. That is, there exists a constant $K < \infty$ such that for any $h, h' \in \mathcal{H}$:
\begin{equation}
    \| \Psi(h) - \Psi(h') \| \le K \| h - h' \|
\end{equation}
This is a standard assumption in generative modeling to ensure the well-posedness of the induced ODE flow~\cite{liu2022flow} and is often enforced in deep neural networks via regularization techniques~\cite{miyato2018spectral}.
\end{definition}

\begin{definition}[Spectral Decomposition in Latent Space]
For any latent state $h \in \mathcal{H}$, we decompose it into a \textbf{DC component} and a \textbf{AC component}:
\begin{equation}
    h = h_{DC} + h_{AC}
\end{equation}
where $h_{DC} \triangleq \frac{1}{D} \mathbf{1}_D \mathbf{1}_D^\top h$ is the spatial mean, and $h_{AC} \triangleq h - h_{DC}$ is the spatial residual. Equivalently, $h_{DC} = \mathbf{1}_D \mu_t$, where $\mu_t \in \mathbb{R}^{1 \times C}$ is the spatial-mean (DC) vector of Eq~\ref{eq:dc_def}; thus $h_{DC}$ is the broadcast of the main-text DC vector across all $D$ tokens, and $h_{AC} = h - \mathbf{1}_D \mu_t$.

\end{definition}
\subsection{Mechanism of Early Spectral Collapse}
In this section, we analyze why the learned vector field collapses to the spatial mean at $t \approx 0$.

\begin{proposition}[Dominance of Ensemble Mean in High-Noise Regime]
\label{appx:prop-mean}
For a sufficiently small time $t \approx 0$ (high-noise regime), the optimal vector field $v^*$ is dominated by the drift towards the Conditional Ensemble Mean. The deviation caused by instance-specific details is strictly bounded by $\mathcal{O}(t)$.
\end{proposition}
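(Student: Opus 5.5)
Under the 1-Rectified Flow setup with independent coupling, the interpolant is $x_t=(1-t)X_0+tX_1$ and the regression target is $X_1-X_0$. The MSE-optimal field is therefore the conditional expectation
\begin{equation}
v^*(x_t,t)=\mathbb{E}[X_1-X_0\mid x_t,c]=\frac{\mathbb{E}[X_1\mid x_t,c]-x_t}{1-t},
\end{equation}
where the second equality uses $X_0=(x_t-tX_1)/(1-t)$. The plan is to show that $\mathbb{E}[X_1\mid x_t,c]$ equals the unconditional conditional mean $\bar X_1:=\mathbb{E}[X_1\mid c]$ up to an $\mathcal{O}(t)$ correction. I would then transport this statement to the block-level representation through $\Psi$ and the DC projection.

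\textbf{Step 1: posterior reduces to the prior at $t\approx0$.} By Bayes' rule with the independent coupling,
\begin{equation}
p(X_1\mid x_t,c)\propto p_{\text{data}}(X_1\mid c)\,\mathcal{N}\bigl(x_t;\,tX_1,\,(1-t)^2I\bigr).
\end{equation}
Expanding the Gaussian log-likelihood in $t$ gives
\begin{equation}
-\frac{\|x_t-tX_1\|^2}{2(1-t)^2}=-\frac{\|x_t\|^2}{2(1-t)^2}+\frac{t\langle x_t,X_1\rangle}{(1-t)^2}+\mathcal{O}(t^2\|X_1\|^2).
\end{equation}
The first term does not depend on $X_1$ and cancels in normalization. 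The posterior is thus an exponential tilt of $p_{\text{data}}$ with parameter $\mathcal{O}(t)$. Differentiating the tilted mean at $t=0$ gives
\begin{equation}
\mathbb{E}[X_1\mid x_t,c]=\bar X_1+t\,\mathrm{Cov}(X_1\mid c)\,x_t+\mathcal{O}(t^2).
\end{equation}
For this I would assume bounded support or finite exponential moments of $p_{\text{data}}$, and restrict to $\|x_t\|$ in a high-probability ball of radius $\mathcal{O}(\sqrt{\dim})$.

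\textbf{Step 2: decompose the field.} Substituting into the formula for $v^*$ yields
\begin{equation}
v^*=\underbrace{\frac{\bar X_1-x_t}{1-t}}_{\text{drift to ensemble mean}}+\underbrace{\frac{t\,\mathrm{Cov}(X_1\mid c)\,x_t}{1-t}+\mathcal{O}(t^2)}_{\text{instance-specific}}.
\end{equation}
The second term is the only part that carries information about which sample $X_1$ is being generated, and it is $\mathcal{O}(t)$. That proves the statement in data space.

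For the latent statement, I would apply the $K$-Lipschitz assumption on $\Psi$ to bound $\|\Psi(h_t)-\mathbb{E}[h_1\mid c]\|$ by $\mathcal{O}(Kt)$. The DC projection $\frac1D\mathbf{1}\mathbf{1}^\top$ is a contraction, so the DC part of the target is $\bar h_{1,DC}+\mathcal{O}(t)$. This part is seed-invariant, which is the link back to the early lock-in observed in Figure~\ref{fig:method1}.

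\textbf{Main obstacle.} Step 1 is the hard part: making the $\mathcal{O}(t)$ remainder uniform. The linear tilt term $t\langle x_t,X_1\rangle$ is only small if $\|x_t\|$ and the moments of $X_1$ are controlled, and $x_t$ is Gaussian with unbounded support. I would first assume compact support $\|X_1\|\le R$ and $\|x_t\|\le r$. That gives an explicit constant of order $R^2 r$ with remainder $\mathcal{O}(t^2R^3r^2)$, and the restriction to the ball can be removed by a tail bound. A further subtlety is that "dominated" needs the leading term to be nonvanishing. I would state explicitly that the claim concerns the relative size of the instance-specific term, namely $\mathcal{O}(t)$ against $\Theta(1)$ generically.
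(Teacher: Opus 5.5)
Your proposal is correct, but it takes a different route from the paper. The paper writes $v^*=\Psi(h_t)-\Phi(h_t)$ and defers the source term $\Phi$ to an informal remark. It handles $\Psi$ with a one-line perturbation argument. The assumed $K$-Lipschitz property gives $\|\Psi(h_t)-\Psi(h_0)\|\le Kt\|h_1-h_0\|$, and independence of the coupling sets $\Psi(h_0)=\mu^H_c$. This yields $\mathbb{E}\|R(t)\|\le tK\,\mathbb{E}\|h_1-h_0\|$ directly in latent space.

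You instead remove the source term exactly via $X_0=(x_t-tX_1)/(1-t)$, so $v^*$ depends only on the posterior mean. You then compute that posterior mean from the Gaussian likelihood as an exponential tilt of $p_{\text{data}}$. This trades the Lipschitz hypothesis for a Gaussian source plus support or moment assumptions, and for the $\|x_t\|\sim\sqrt{\dim}$ control you already flag.

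In return, your route gives two things. It identifies the instance-specific correction explicitly as $t\,\mathrm{Cov}(X_1\mid c)\,x_t$. It also justifies a step the paper takes for granted: the map $\Psi$ to which the Lipschitz bound is applied is the time-$t$ posterior mean, but independence only yields $\Psi(h_0)=\mu^H_c$ for the time-$0$ conditional expectation. Your expansion is exactly what shows the time-$t$ posterior mean stays within $\mathcal{O}(t)$ of $\mu^H_c$ near the noise.

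For the same reason, you can drop the $K$-Lipschitz transfer to latent space; as you phrase it, that step would rest on the same identification. Running the identical tilt with $\mathcal{E}(X_1)$ in place of $X_1$ gives $\mathbb{E}[\mathcal{E}(X_1)\mid x_t,c]=\mu^H_c+t\,\mathrm{Cov}(\mathcal{E}(X_1),X_1\mid c)\,x_t+\mathcal{O}(t^2)$. The tower property then carries the $\mathcal{O}(t)$ bound over to conditioning on the coarser $h_t=\mathcal{E}(x_t)$.
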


\begin{proof}
Let the objective be the standard MSE loss. It is a fundamental property of the $L_2$ risk that its global minimizer $v^*$ is the conditional expectation of the target vector field \citep{bishop2006pattern, lipman2023flow}:
\begin{equation}
v^*(h_t, t; c) = \mathbb{E}_{X_0, X_1}[\mathcal{E}(X_1) - \mathcal{E}(X_0) \mid h_t; c].
\end{equation}
We decompose the optimal field into its two conditional expectations,
\begin{equation}
v^*(h_t, t; c) = \Psi(h_t) - \Phi(h_t), \quad
\Psi(h_t) = \mathbb{E}[\mathcal{E}(X_1) \mid h_t, c], \quad
\Phi(h_t) = \mathbb{E}[\mathcal{E}(X_0) \mid h_t, c].
\end{equation}
We analyze the target term $\Psi$ below; the source term $\Phi$ is treated after the collapse argument, as it accounts for the residual seed-specific (AC) content.

In the early phase, the interpolated state is $h_t = (1-t)h_0 + t h_1$. We compare the estimation at time $t$ with the estimation at time $0$ using the Lipschitz assumption:
\begin{equation}
\|\Psi(h_t) - \Psi(h_0)\| \leq K\|h_t - h_0\| = K\|t(h_1 - h_0)\|.
\end{equation}
At $t=0$, due to independent coupling ($h_0 \perp h_1$), the input $h_0$ provides no information about $X_1$, so $\Psi(h_0) = \mathbb{E}[\mathcal{E}(X_1) \mid c] \triangleq \mu^H_c$. Thus, we can write:
\begin{equation}
\Psi(h_t) = \mu^H_c + R(t), \quad \text{where } \mathbb{E}[\|R(t)\|] \leq t \cdot K\,\mathbb{E}[\|h_1 - h_0\|].
\end{equation}
Thus, the expected magnitude of the residual is $\mathcal{O}(t)$. The residual $R(t)$ captures the seed- or instance-dependent correction required for recovering sample-specific structure. Since the conditional ensemble mean ($\mu^H_c$) remains an $\mathcal{O}(1)$ term as $t \to 0$, the early target estimate is dominated by this common component.
\end{proof}

\begin{proposition}[DC Dominance of the Conditional Ensemble Mean under Under-specified Prompts]
\label{appx:prop-2}
The Latent Ensemble Mean $\mu^H_c$ is spectrally sparse, dominated by the DC component ($h_{DC}$), as the spatial variations (AC components) cancel out across the data distribution.
\end{proposition}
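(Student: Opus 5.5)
The plan is to make the informal statement precise through an \emph{approximate spatial stationarity} assumption on the conditional data distribution, and then show that averaging over that distribution suppresses the AC part of $\mu^H_c$ while leaving the DC part intact. Write $\mu^H_c = \mathbb{E}[\mathcal{E}(X_1)\mid c]$, as in the proof of Proposition~\ref{appx:prop-mean}, and use the decomposition $h = h_{DC} + h_{AC}$ from the definition of spectral decomposition. Both projections $P_{DC} = \frac{1}{D}\mathbf{1}_D\mathbf{1}_D^\top$ and $P_{AC} = I - P_{DC}$ are linear, so they commute with expectation:
\begin{equation}
(\mu^H_c)_{DC} = \mathbb{E}[\mathcal{E}(X_1)_{DC}\mid c], \qquad (\mu^H_c)_{AC} = \mathbb{E}[\mathcal{E}(X_1)_{AC}\mid c].
\end{equation}
The claim therefore reduces to showing that $\|\mathbb{E}[h_{1,AC}\mid c]\|$ is small compared with $\|\mathbb{E}[h_{1,DC}\mid c]\|$, even though the per-sample AC energy $\mathbb{E}\|h_{1,AC}\|^2$ may be large.

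\textbf{Step 1: formalize under-specification.} I would say that a prompt $c$ is under-specified if the law of $h_1$ given $c$ is approximately invariant under a transitive group $G$ of token permutations, such as cyclic spatial translations of the token grid. Concretely, take a group $G$ acting on token indices with $\mathcal{E}(X_1) \stackrel{d}{\approx} \sigma\cdot\mathcal{E}(X_1)$ for all $\sigma\in G$, up to a total-variation (or Wasserstein) error $\varepsilon_c$. This is the natural reading of ``the spatial variations cancel out across the data distribution'': layout, position and pose are not pinned down by the text.

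\textbf{Step 2: exact case.} Under exact invariance ($\varepsilon_c=0$), $\mu^H_c = \sigma\cdot\mu^H_c$ for every $\sigma\in G$. Averaging over $G$ gives
\begin{equation}
\mu^H_c = \frac{1}{|G|}\sum_{\sigma \in G}\sigma\cdot\mu^H_c .
\end{equation}
For a transitive action, this group average of any matrix in $\mathbb{R}^{D\times C}$ equals $P_{DC}$ applied to it, since every row gets replaced by the mean over all rows. Hence $(\mu^H_c)_{AC}=0$. In Fourier terms, for cyclic translations the nonzero frequencies pick up phases $e^{2\pi i k\cdot s/D}$ that average to zero.

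\textbf{Step 3: approximate case.} When invariance only holds up to $\varepsilon_c$, I would bound
\begin{equation}
\|(\mu^H_c)_{AC}\| \le \max_\sigma \|\mu^H_c - \sigma\cdot\mu^H_c\| \le 2\varepsilon_c \sup\|h_1\|,
\end{equation}
assuming bounded latents; with Wasserstein distance the constant would change but the form stays the same. This gives the sparsity statement in the quantitative form
\begin{equation}
\frac{\|(\mu^H_c)_{AC}\|^2}{\|\mu^H_c\|^2} = \mathcal{O}(\varepsilon_c^2).
\end{equation}
I would then contrast this with the per-sample ratio $\mathbb{E}\|h_{1,AC}\|^2/\mathbb{E}\|h_1\|^2$, which need not be small, to show that the concentration on DC is produced by the averaging itself.

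\textbf{Main obstacle.} The hard part is justifying that the representation map $\mathcal{E}$ is equivariant, or nearly so. Invariance of the law of $X_1$ transfers to $h_1$ only if $\mathcal{E}(\sigma X)\approx\sigma\,\mathcal{E}(X)$, and positional embeddings break this. I would first try to absorb the failure of equivariance into $\varepsilon_c$ as an explicit assumption, citing the approximately translation-equivariant behaviour of patch-based DiT blocks. As a fallback, I would state the result directly at the level of $h_1$, assuming near-stationarity of the latent law given $c$, and use the empirical DC energy ratio of $51.2\%$ in Table~\ref{tab:timestep_analysis} as supporting evidence. Finally, I need $\|(\mu^H_c)_{DC}\|$ to be bounded away from zero so that the ratio above is meaningful. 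I would take this as a nondegeneracy assumption: the conditional mean token is nonzero, which is reflected in the high cross-seed DC cosine similarity.
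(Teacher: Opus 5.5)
Your argument is correct under its stated hypotheses. It is a genuine formalization rather than a reconstruction of the paper's proof.

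\textbf{Where you agree with the paper.} The paper takes the same first step you do, using linearity of the DC/AC split to write $\mu^H_c = \mathbb{E}[\mathcal{E}(X_1)_{DC}\mid c] + \mathbb{E}[\mathcal{E}(X_1)_{AC}\mid c]$. It then simply posits both remaining facts in words. First, the DC mean is nonzero because the DC part encodes prompt-correlated global semantics. Second, the AC mean is approximately zero because the spatial phase of details is not fixed by the prompt and so \emph{interferes destructively} in expectation. You keep the first fact as an explicit nondegeneracy assumption, exactly as the paper does.

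\textbf{Where you differ.} You derive the second fact instead of assuming it. Approximate invariance of the law of $h_1$ under a transitive group of token permutations controls the AC mean. The key observation is that, for a transitive action, the group average of any $D\times C$ matrix equals its $P_{DC}$-projection. This gives $\|(\mu^H_c)_{AC}\| \le \max_\sigma \|\mu^H_c - \sigma\cdot\mu^H_c\|$. That quantity is in turn bounded by the invariance defect: by $2\varepsilon_c\sup\|h_1\|$ in total variation, or directly by the Wasserstein-1 distance via Jensen, with no boundedness needed.

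\textbf{Trade-off.} What your route buys is a quantitative bound and a precise statement of what \emph{under-specified} must mean for the cancellation to hold. What it costs is a stronger hypothesis than the paper's loose \emph{not phase-aligned}. Real conditional image laws, with center bias or sky-above-ground layouts, can violate approximate translation invariance, and so can positional embeddings in $\mathcal{E}$. You flag the positional-embedding issue yourself. Your fallback of assuming near-stationarity directly at the level of $h_1$ is essentially the paper's assumption made precise.

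\textbf{One small point.} The $51.2\%$ DC energy ratio is a per-sample statistic of intermediate representations of noisy states. By your own contrast in Step 3, it says nothing about cancellation in the ensemble mean. It should not be cited as evidence for near-stationarity.
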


\begin{proof}
We apply the expectation operator to the spectral decomposition of the encoded target $\mathcal{E}(X_1)$:
\begin{equation}
\mu^H_c = \mathbb{E}_{X_1}[\mathcal{E}(X_1)_{DC} \mid c] + \mathbb{E}_{X_1}[\mathcal{E}(X_1)_{AC} \mid c].
\end{equation}
\textbf{1. Stability of DC:} The DC component $\mathcal{E}(X_1)_{DC}$ encodes global semantic information strongly correlated with the text $c$, aligning coherently across samples. Thus, $\|\mathbb{E}[\mathcal{E}(X_1)_{DC} \mid c]\| \gg 0$.

\textbf{2. Cancellation of AC:} The AC component $\mathcal{E}(X_1)_{AC}$ encodes high-frequency, spatially localized detail. We assume that, conditioned on $c$, the phase (spatial placement) of these details is not fully determined by the prompt but varies across the data distribution---e.g., a prompt fixes what objects appear and their coarse global statistics, but leaves their precise position, pose, and fine texture under-specified. Under this assumption, the AC components are not phase-aligned across samples, so they interfere destructively in expectation:
\begin{equation}
\mathbb{E}[\mathcal{E}(X_1)_{AC} \mid c] \approx 0.
\end{equation}
Consequently, the learning target collapses to the spatial mean: $\mu^H_c \approx \mathbb{E}[\mathcal{E}(X_1)_{DC} \mid c]$.
\end{proof}

\paragraph{Summary of Collapse.}
Together, Propositions~\ref{appx:prop-mean} and~\ref{appx:prop-2} suggest that, in the early high-noise regime, the \emph{target} term $\Psi$ is strongly biased toward a common DC-dominated direction $\mu^H_c$, while its seed-specific AC correction $R(t)$ remains $\mathcal{O}(t)$. The source term $\Phi$, by contrast, stays seed-dependent: at $t \approx 0$ we have $h_t \approx h_0$, so $\Phi$ reduces to the sample's own noise representation and retains its instance-specific (AC) content. Hence the early collapse is confined to the DC subspace of the target estimate, while AC variation is preserved---consistent with the empirically observed coexistence of high cross-seed DC similarity and low AC similarity in Figure~\ref{fig:method1}. This provides a formal motivation for the observation that early trajectories become closely aligned in the DC subspace. The next result analyzes what happens once such early DC alignment has reduced the cross-seed DC separation to an $\epsilon$-neighborhood.

\subsection{Bounded Recovery after Early Lock-in}
Having motivated why early DC alignment can reduce seed-specific separation in the DC subspace, we now show that any later recovery of this separation under locally Lipschitz ODE dynamics is bounded by the residual that remains after the early phase. Since the DC component encodes global layout and coarse structure---the under-specified factors that govern perceptual diversity---while the AC component carries localized texture, we treat the cross-seed separation in the DC subspace as the quantity controlling sample diversity.

Let $P_{DC} = \tfrac{1}{D}\mathbf{1}_D\mathbf{1}_D^\top$ denote the (linear, time-invariant) projection onto the DC subspace, so that $h_{DC} = P_{DC} h$. We strengthen Definition~E.3 to the DC subspace: the learned field is assumed locally Lipschitz \emph{on the DC subspace}, i.e.\ there exists $L < \infty$ such that for all states on the compact trajectory domain,
\begin{equation}
\|P_{DC}\big(v_\theta(x, t; c) - v_\theta(y, t; c)\big)\| \;\leq\; L\,\|P_{DC}(x - y)\|.
\end{equation}

\begin{theorem}[Bounded Diversity Recovery under ODE Dynamics]
Consider the ODE flow $\dot{h}_t = v_\theta(h_t, t; c)$ generated by the Transformer network, with $v_\theta$ locally Lipschitz on the DC subspace with constant $L < \infty$.\footnote{\textbf{Justification for the Lipschitz Assumption:} While standard dot-product attention is not globally Lipschitz, practical implementations employ Layer Normalization and operate on bounded latent representations (compact support). Thus, the vector field is differentiable with a bounded gradient (finite $L$) along the integration path.}
If the early spectral collapse (Section~E.2) constrains the cross-seed DC separation to an $\epsilon$-neighborhood at an early time $t^*$, i.e.\ $\|P_{DC}(h^{(i)}_{t^*} - h^{(j)}_{t^*})\| \leq \epsilon$, then the DC separation at the final time $t=1$ is bounded by
\begin{equation}
\|P_{DC}(h^{(i)}_1 - h^{(j)}_1)\| \;\leq\; \epsilon \cdot \exp\!\big(L(1 - t^*)\big).
\end{equation}
\end{theorem}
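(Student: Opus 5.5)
The plan is a Grönwall-type argument applied to the DC-projected difference of two trajectories. Fix two seeds $i,j$ and set $\delta_t := h^{(i)}_t - h^{(j)}_t$ and $e_t := P_{DC}\delta_t$ for $t \in [t^*,1]$.

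\textbf{Step 1: ODE for $e_t$.} Because $P_{DC} = \tfrac{1}{D}\mathbf{1}_D\mathbf{1}_D^\top$ is linear and time-invariant, it commutes with differentiation in $t$. Hence
\begin{equation}
\dot e_t = P_{DC}\big(v_\theta(h^{(i)}_t,t;c) - v_\theta(h^{(j)}_t,t;c)\big).
\end{equation}
The DC-subspace Lipschitz assumption then gives $\|\dot e_t\| \le L\|e_t\|$ for every $t \in [t^*,1]$. This uses that both trajectories stay in the compact trajectory domain, which the footnote's justification supplies.

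\textbf{Step 2: scalar differential inequality.} Let $u(t) := \|e_t\|$. Since $u$ may fail to be differentiable where $e_t = 0$, I would avoid differentiating the norm directly. I would work instead with $\phi(t) := \|e_t\|^2$, which is $C^1$ because the trajectories are $C^1$ (the field is locally Lipschitz, so solutions exist and are unique). By Cauchy–Schwarz,
\begin{equation}
\dot\phi(t) = 2\langle e_t, \dot e_t\rangle \le 2\|e_t\|\,\|\dot e_t\| \le 2L\,\phi(t).
\end{equation}
Alternatively one could use the integral form $u(t) \le u(t^*) + \int_{t^*}^t L\,u(s)\,ds$, which follows from $e_t = e_{t^*} + \int_{t^*}^t \dot e_s\,ds$ and the triangle inequality.

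\textbf{Step 3: Grönwall.} Multiply by the integrating factor $e^{-2L(t-t^*)}$. This shows $\phi(t)e^{-2L(t-t^*)}$ is nonincreasing, so $\phi(t) \le \phi(t^*)e^{2L(t-t^*)}$. Taking square roots at $t=1$ and using the hypothesis $u(t^*) \le \epsilon$ gives
\begin{equation}
\|P_{DC}(h^{(i)}_1 - h^{(j)}_1)\| \le \epsilon\, e^{L(1-t^*)}.
\end{equation}
Equivalently, the integral form together with the standard integral Grönwall lemma yields the same bound.

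\textbf{Main obstacle.} The delicate point is Step 1: the assumed Lipschitz property must bound the DC part of the field difference by the DC part of the state difference alone, with no AC term. This is exactly how the strengthened assumption is stated, so I would invoke it verbatim. I would also remark that it is a genuine structural assumption (DC dynamics decoupled from AC differences) rather than a consequence of ordinary Lipschitz continuity. A secondary point is that $L$ must hold uniformly along both trajectories on $[t^*,1]$, which I would justify by compactness of the trajectory domain as in the footnote.
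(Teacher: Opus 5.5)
Your proposal is correct and follows essentially the same route as the paper. Both define the DC-projected difference, differentiate its squared norm, bound that derivative by $2L\|\delta_{DC}\|^2$ via Cauchy--Schwarz and the DC-subspace Lipschitz condition, then apply Gr\"onwall and take square roots. Your two extra remarks are sound refinements the paper leaves implicit: working with the squared norm avoids nondifferentiability at zero, and the DC-only Lipschitz bound is a genuine structural assumption rather than a consequence of ordinary Lipschitz continuity.
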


\begin{proof}
Let $h^{(i)}_t$ and $h^{(j)}_t$ be two trajectories starting from distinct noise samples whose DC separation is reduced to at most $\epsilon$ at an early time $t^*$ due to the mechanisms in Propositions~\ref{appx:prop-mean}--~\ref{appx:prop-2}. Define the DC difference vector
\begin{equation}
\delta_{DC}(t) = P_{DC}\big(h^{(i)}_t - h^{(j)}_t\big).
\end{equation}
For $t \geq t^*$, the time derivative of the squared distance is
\begin{equation}
\frac{d}{dt}\|\delta_{DC}(t)\|^2 = 2\langle \delta_{DC}(t), \dot{\delta}_{DC}(t)\rangle.
\end{equation}
Since both trajectories evolve under the same vector field and text condition $c$, and $P_{DC}$ is linear and time-invariant,
\begin{equation}
\dot{\delta}_{DC}(t) = P_{DC}\big(v_\theta(h^{(i)}_t, t; c) - v_\theta(h^{(j)}_t, t; c)\big).
\end{equation}
Using the Cauchy--Schwarz inequality and the DC-subspace Lipschitz condition,
\begin{equation}
\frac{d}{dt}\|\delta_{DC}(t)\|^2 \leq 2\|\delta_{DC}(t)\|\,\big\|P_{DC}(v_\theta(h^{(i)}_t,t;c) - v_\theta(h^{(j)}_t,t;c))\big\| \leq 2L\|\delta_{DC}(t)\|^2.
\end{equation}
Applying Gr\"onwall's inequality yields
\begin{equation}
\|\delta_{DC}(1)\|^2 \leq \|\delta_{DC}(t^*)\|^2 \exp\!\big(2L(1 - t^*)\big).
\end{equation}
Taking the square root and using $\|\delta_{DC}(t^*)\| \leq \epsilon$, we obtain
\begin{equation}
\|P_{DC}(h^{(i)}_1 - h^{(j)}_1)\| = \|\delta_{DC}(1)\| \leq \epsilon \exp\!\big(L(1 - t^*)\big).
\end{equation}
This gives the desired bound.
\end{proof}

Consequently, if the early collapse drives $\epsilon \to 0$, the right-hand side vanishes, so the final-time DC separation is upper-bounded by a quantity that tends to zero. This formalizes the lock-in intuition: once early dynamics align the global-structure (DC) subspace across seeds, later refinement can only amplify the residual DC differences that remain, rather than reconstructing the suppressed structural variation. We emphasize that this is a \emph{capacity} bound---it shows that early DC alignment \emph{caps} the recoverable diversity, which motivates intervening on the DC component early; the resulting diversity gains of DAVE are then established empirically.

\section{Additional examples}
\begin{figure}[ht]
    \centering
    \includegraphics[width=0.7\linewidth]{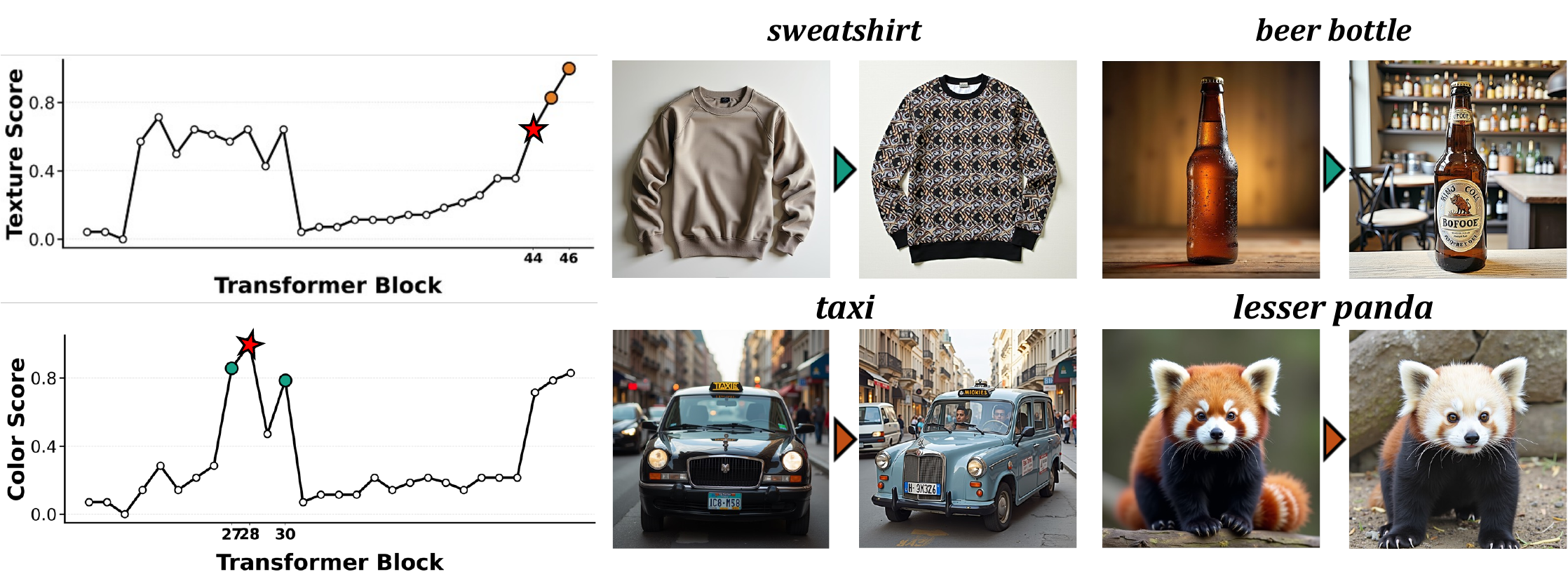}
    \caption{Block-wise Analysis on Flux.1-dev.}
    \label{fig:appx_flux}
\end{figure}

\label{appx:examples}

\subsection{Block-wise analysis in Flux}
\label{appx:block-wise}
Here, we provide a complementary case study on Flux.1-dev to examine whether the block-dependent behaviors observed in Stable Diffusion extend to a different architecture. Beyond confirming that DC attenuation produces consistent seed-level divergence in the intended regime, we observe that the resulting attribute profile is again block-dependent, but the specific block–attribute associations are model-specific. Flux.1-dev exhibits systematic preferences toward certain attribute shifts, but these patterns do not map one-to-one to the block-index trends in Stable Diffusion 3.5, consistent with architectural and training differences. Figure~\ref{fig:appx_flux} visualizes these Flux-specific tendencies—most prominently in texture and color—supporting the view that DC attenuation interacts with internal blocks in a structured yet model-dependent manner and may serve as a basis for controllable diversity when paired with architectural insight.

\subsection{Qualitative Examples}
\vspace{-0.5em}
\begin{figure}[ht]
    \centering
    \includegraphics[width=0.96\linewidth]{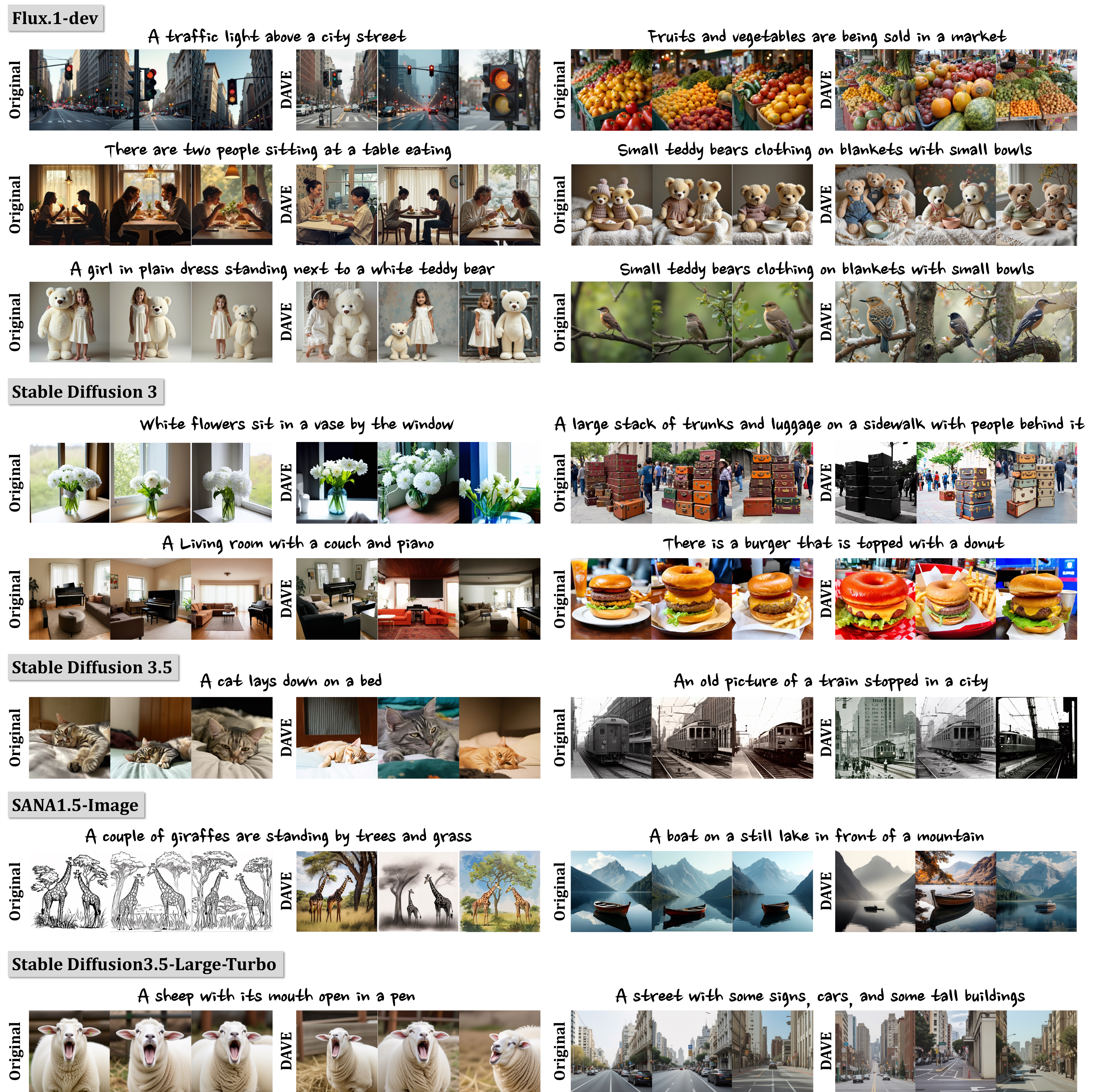}
    \caption{Qualitative results across different models.}
    \label{fig:appx_qual}
\end{figure}

Following the results presented in the main text, we provide additional qualitative visualizations for each model in this section. To ensure a rigorous and consistent comparison, all images were generated using the same seeds as the baseline samples produced without our internal manipulations. This allows for a direct, side-by-side observation of how our method influences the generative process across various architectures while maintaining the fundamental structural characteristics of the original latent trajectories.

\section{Computational Cost}
\label{appx:computation}

We analyze the computational costs of DAVE and compare it with existing diversity-enhancement methods. Let $T$ denote the number of sampling steps and $B$ the batch size.

\textbf{Baseline} diffusion or flow-based sampling requires one model evaluation per step, resulting in linear complexity with respect to both the number of sampling steps and the batch size: $O(TB)$.

\textbf{Lightweight perturbation-based methods}, such as CADS, preserve linear computational scaling by applying lightweight conditioning perturbations during sampling without explicit inter-sample interactions. Similarly, SPARKE maintains linear asymptotic scaling with respect to the batch size, although additional feature-statistics computations may introduce nontrivial practical memory overhead during sampling.

In contrast, \textbf{interaction-based diversity-enhancement methods}, including Particle Guidance (PG), DiverseFlow, OSCAR, and SPELL, require explicit or implicit interactions across samples during sampling. Particle Guidance and SPELL rely on inter-sample repulsion terms, DiverseFlow introduces kernel-based coupling across trajectories, and OSCAR performs batch-wise diversity optimization using feature-space interactions. Consequently, these methods may incur computational and memory overhead that scales quadratically with respect to the batch size due to batch-wise inter-sample interactions, resulting in overall complexity of $O(TB^2)$.

\textbf{DAVE} performs lightweight representation-level modulation only during the early denoising stage. By directly operating on intermediate representations without requiring feature-bank maintenance or batch-wise inter-sample interactions, DAVE preserves linear batch scaling and maintains near-identical practical runtime and memory usage compared to the baseline.

\begin{table}[t]
\caption{Computational complexity of diversity-enhancement methods.}
\centering
\small
\begin{tabular}{lccc}
\toprule
Method
& Asymptotic time
& Additional memory
& Batch scaling \\
\midrule

Baseline
& $O(TB)$
& $O(B)$
& Linear \\

CADS
& $O(TB)$
& $O(B)$
& Linear \\

SPARKE
& $O(TB)$
& $O(B)$
& Linear \\

Particle Guidance
& $O(TB^2)$
& $O(B^2)$
& Quadratic \\

DiverseFlow
& $O(TB^2)$
& $O(B^2)$
& Quadratic \\

OSCAR
& $O(TB^2)$
& $O(B^2)$
& Quadratic \\

SPELL
& $O(TB^2)$
& $O(B^2)$
& Quadratic \\

\textbf{DAVE (Ours)}
& $\mathbf{O(TB)}$
& $\mathbf{O(B)}$
& Linear \\

\bottomrule
\end{tabular}
\label{tab:cost}
\end{table}

Table~\ref{tab:cost} summarizes the computational complexity of existing diversity-enhancement approaches. Lightweight perturbation-based methods preserve linear scaling with relatively small additional overhead, whereas interaction-based methods incur substantially larger computational and memory costs due to batch-wise diversity interactions during sampling. In contrast, DAVE maintains linear scaling with respect to the batch size through lightweight representation-level modulation without requiring feature-bank maintenance, inter-sample interactions, or additional optimization procedures.

\begin{table}[h]
\caption{Runtime and memory comparison of diversity-enhancement methods on SD3.}
\centering
\small
\begin{tabular}{lccc}
\toprule
Method
& ms/img $\downarrow$
& img/s $\uparrow$
& Reserved Mem $\downarrow$ \\
\midrule

\multicolumn{4}{c}{\textit{Batch Size = 1}} \\
\midrule

Baseline
& 2197.06 $\pm$ 25.93
& 0.455
& 21.0G \\

CADS
& 2217.12 $\pm$ 22.10
& 0.451
& 21.0G \\

SPARKE
& 2330.86 $\pm$ 46.11
& 0.429
& 37.7G \\

\textbf{DAVE (Ours)}
& 2201.55 $\pm$ 42.20
& 0.454
& 21.0G \\

\midrule
\multicolumn{4}{c}{\textit{Batch Size = 4}} \\
\midrule

Baseline
& 1956.97 $\pm$ 4.75
& 0.511
& 33.4G \\

SPELL
& 2791.49 $\pm$ 2.48
& 0.358
& 34.3G \\

Particle Guidance
& 9400.93 $\pm$ 36.32
& 0.106
& 49.4G \\

\textbf{DAVE (Ours)}
& 1956.98 $\pm$ 3.25
& 0.511
& 33.4G \\

\bottomrule
\end{tabular}
\label{tab:runtime_memory}
\end{table}

As shown in Table~\ref{tab:runtime_memory}, DAVE preserves near-identical runtime and reserved GPU memory usage to the original SD3 sampler while substantially outperforming interaction-based approaches in practical efficiency. In particular, DAVE avoids the significant computational and memory overhead associated with costly batch-wise inter-sample interactions during sampling. All measurements are averaged over 100 runs under identical sampling settings on a single NVIDIA H100 GPU.

\end{document}